\def\eqref#1{equation~\ref{#1}}
\def\1{\bm{1}}
\DeclareMathAlphabet{\mathsfit}{\encodingdefault}{\sfdefault}{m}{sl}
\SetMathAlphabet{\mathsfit}{bold}{\encodingdefault}{\sfdefault}{bx}{n}
\DeclareMathOperator*{\argmax}{arg\,max}
\newtheorem{lemma}{Lemma}
\newtheorem{corollary}{Corollary}
\newtheorem{prop}{Proposition}
\newtheorem{proof}{Proof}
\providecommand{\customgenericname}{}
\newcommand{\newcustomtheorem}[2]{%
  \newenvironment{#1}[1]
  {%
   \renewcommand\customgenericname{#2}%
   \renewcommand\theinnercustomgeneric{##1}%
   \innercustomgeneric
  }
  {\endinnercustomgeneric}
}
\icmltitlerunning{Convex Regularization in Monte-Carlo Tree Search}
\begin{document}

\twocolumn[
\icmltitle{Convex Regularization in Monte-Carlo Tree Search}



\icmlsetsymbol{equal}{*}

\begin{icmlauthorlist}
\icmlauthor{Tuan Dam}{1}
\icmlauthor{Carlo D'Eramo}{1}
\icmlauthor{Jan Peters}{1,2}
\icmlauthor{Joni Pajarinen}{1,3}
\end{icmlauthorlist}

\icmlaffiliation{1}{Department of Computer Science, Technische Universit{\"a}t Darmstadt, Germany}
\icmlaffiliation{2}{Robot Learning Group, Max Planck Institute for Intelligent Systems,T{\"u}bingen, Germany}
\icmlaffiliation{3}{Computing Sciences, Alto University, Finland}

\icmlcorrespondingauthor{Tuan Dam}{tuan.dam@tu-darmstadt.de}
\icmlcorrespondingauthor{Carlo D'Eramo}{carlo.deramo@tu-darmstadt.de}
\icmlcorrespondingauthor{Jan Peters}{mail@jan-peters.net}
\icmlcorrespondingauthor{Joni Pajarinen}{pajarinen@ias.tu-darmstadt.de}
\icmlkeywords{Monte-Carlo tree search, Reinforcement learning, Convex regularization}

\vskip 0.3in
]



\printAffiliationsAndNotice{}  

\begin{abstract}
Monte-Carlo planning and Reinforcement Learning~(RL) are essential to sequential decision making. The recent AlphaGo and AlphaZero algorithms have shown how to successfully combine these two paradigms to solve large scale sequential decision problems. These methodologies exploit a variant of the well-known UCT algorithm to trade off the exploitation of good actions and the exploration of unvisited states, but their empirical success comes at the cost of poor sample-efficiency and high computation time. In this paper, we overcome these limitations by introducing the use of convex regularization in Monte-Carlo Tree Search~(MCTS) to drive exploration efficiently and to improve policy updates. First, we introduce a unifying theory on the use of generic convex regularizers in MCTS, deriving the first regret analysis of regularized MCTS and showing that it guarantees an exponential convergence rate. Second, we exploit our theoretical framework to introduce novel regularized backup operators for MCTS, based on the relative entropy of the policy update and, more importantly, on the Tsallis entropy of the policy, for which we prove superior theoretical guarantees. We empirically verify the consequence of our theoretical results on a toy problem. Finally, we show how our framework can easily be incorporated in AlphaGo and we empirically show the superiority of convex regularization, w.r.t. representative baselines, on well-known RL problems across several Atari games.
\end{abstract}

\section{Introduction}
Monte-Carlo Tree Search~(MCTS) is a well-known algorithm to solve decision-making problems through the combination of Monte-Carlo planning and an incremental tree structure~\citep{coulom2006efficient}. MCTS provides a principled approach for trading off between exploration and exploitation in sequential decision making. Moreover, recent advances have shown how to enable MCTS in continuous and large problems~\citep{silver2016mastering,yee2016monte}. Most remarkably, AlphaGo~\citep{silver2016mastering} and AlphaZero~\citep{silver2017bmastering,silver2017amastering} couple MCTS with neural networks trained using Reinforcement Learning (RL)~\citep{sutton1998introduction} methods, e.g., Deep Q-Learning~\citep{mnih2015human}, to speed up learning of large scale problems. In particular, a neural network is used to compute value function estimates of states as a replacement of time-consuming Monte-Carlo rollouts, and another neural network is used to estimate policies as a probability prior for the therein introduced PUCT action selection strategy, a variant of well-known UCT sampling strategy commonly used in MCTS for exploration~\citep{kocsis2006improved}. Despite AlphaGo and AlphaZero achieving state-of-the-art performance in games with high branching factor like Go~\citep{silver2016mastering} and Chess~\citep{silver2017bmastering}, both methods suffer from poor sample-efficiency, mostly due to the polynomial convergence rate of PUCT~\citep{xiao2019maximum}. This problem, combined with the high computational time to evaluate the deep neural networks, significantly hinder the applicability of both methodologies.

In this paper, we provide a theory of the use of convex regularization in MCTS, which proved to be an efficient solution for driving exploration and stabilizing learning in RL~\citep{schulman2015trust,schulman2017equivalence, haarnoja2018soft, buesing2020approximate}. In particular, we show how a regularized objective function in MCTS can be seen as an instance of the Legendre-Fenchel transform, similar to previous findings on the use of duality in RL~\citep{mensch2018differentiable,geist2019theory,nachum2020duality} and game theory~\citep{shalev2006convex,pavel2007duality}. Establishing our theoretical framework, we can derive the first regret analysis of regularized MCTS, and prove that a generic convex regularizer guarantees an exponential convergence rate to the solution of the regularized objective function, which improves on the polynomial rate of PUCT. These results provide a theoretical ground for the use of arbitrary entropy-based regularizers in MCTS until now limited to maximum entropy~\citep{xiao2019maximum}, among which we specifically study the relative entropy of policy updates, drawing on similarities with trust-region and proximal methods in RL~\citep{schulman2015trust,schulman2017proximal}, and the Tsallis entropy, used for enforcing the learning of sparse policies~\citep{lee2018sparse}. Moreover, we provide an empirical analysis of the toy problem introduced in~\citet{xiao2019maximum} to evince the practical consequences of our theoretical results for each regularizer. Finally, we empirically evaluate the proposed operators in AlphaGo, on several Atari games, confirming the benefit of convex regularization in MCTS, and in particular the superiority of Tsallis entropy w.r.t. other regularizers.

\section{Preliminaries}\label{S:background}
\subsection{Markov Decision Processes}
We consider the classical definition of a finite-horizon Markov Decision Process~(MDP) as a $5$-tuple $\mathcal{M} = \langle \mathcal{S}, \mathcal{A}, \mathcal{R}, \mathcal{P}, \gamma \rangle$, where $\mathcal{S}$ is the state space, $\mathcal{A}$ is the finite discrete action space, $\mathcal{R}: \mathcal{S} \times \mathcal{A} \times \mathcal{S} \to \mathbb{R}$ is the reward function, $\mathcal{P}: \mathcal{S} \times \mathcal{A} \to \mathcal{S}$ is the transition kernel, and $\gamma \in [0, 1)$ is the discount factor. A policy $\pi \in \Pi: \mathcal{S} \times \mathcal{A} \to \mathbb{R}$ is a probability distribution of the event of executing an action $a$ in a state $s$. A policy $\pi$ induces a value function corresponding to the expected cumulative discounted reward collected by the agent when executing action $a$ in state $s$, and following the policy $\pi$ thereafter: $Q^\pi(s,a) \triangleq \mathbb{E} \left[\sum_{k=0}^\infty \gamma^k r_{i+k+1} | s_i = s, a_i = a, \pi \right]$, where $r_{i+1}$ is the reward obtained after the $i$-th transition. An MDP is solved finding the optimal policy $\pi^*$, which is the policy that maximizes the expected cumulative discounted reward. The optimal policy corresponds to the one satisfying the optimal Bellman equation~\citep{bellman1954theory} $Q^*(s,a) \triangleq \int_{\mathcal{S}} \mathcal{P}(s'|s,a)\left[ \mathcal{R}(s,a,s') + \gamma \max_{a'}Q^*(s',a') \right] ds'$, and is the fixed point of the optimal Bellman operator $\mathcal{T}^*Q(s,a) \triangleq \int_{\mathcal{S}}\mathcal{P}(s'|s,a)\left[\mathcal{R}(s,a,s') + \gamma \max_{a'}Q(s',a') \right] ds'$. Additionally, we define the Bellman operator under the policy $\pi$ as $\mathcal{T}_\pi Q(s,a) \triangleq \int_{\mathcal{S}}\mathcal{P}(s'|s,a)\left[\mathcal{R}(s,a,s') + \gamma \int_{\mathcal{A}} \pi(a'|s') Q(s',a') da' \right] ds'$, the optimal value function $V^{*}(s) \triangleq \max_{a \in \mathcal{A}} Q^*(s,a)$, and the value function under the policy $\pi$ as $V^\pi(s) \triangleq \max_{a \in \mathcal{A}} Q^\pi(s,a)$.

\subsection{Monte-Carlo Tree Search and Upper Confidence bounds for Trees}
Monte-Carlo Tree Search~(MCTS) is a planning strategy based on a combination of Monte-Carlo sampling and tree search to solve MDPs. MCTS builds a tree where the nodes are the visited states of the MDP, and the edges are the actions executed in each state. MCTS converges to the optimal policy~\citep{kocsis2006improved,xiao2019maximum}, iterating over a loop composed of four steps:
\begin{enumerate}
    \item \textbf{Selection:} starting from the root node, a \textit{tree-policy} is executed to navigate the tree until a node with unvisited children, i.e. expandable node, is reached;
    \item \textbf{Expansion:} the reached node is expanded according to the tree policy;
    \item \textbf{Simulation:} run a rollout, e.g. Monte-Carlo simulation, from the visited child of the current node to the end of the episode;
    \item \textbf{Backup:} use the collected reward to update the action-values $Q(\cdot)$ of the nodes visited in the trajectory from the root node to the expanded node.
\end{enumerate}
The tree-policy used to select the action to execute in each node needs to balance the use of already known good actions, and the visitation of unknown states. The Upper Confidence bounds for Trees~(UCT) sampling strategy~\citep{kocsis2006improved} extends the use of the well-known UCB1 sampling strategy for multi-armed bandits~\citep{auer2002finite}, to MCTS. Considering each node corresponding to a state $s \in \mathcal{S}$ as a different bandit problem, UCT selects an action $a \in \mathcal{A}$ applying an upper bound to the action-value function
\begin{equation}
    \text{UCT}(s,a) = Q(s,a) + \epsilon \sqrt{\dfrac{\log{N(s)}}{N(s,a)}},
\end{equation}
where $N(s,a)$ is the number of executions of action $a$ in state $s$, $N(s) = \sum_a N(s,a)$, and $\epsilon$ is a constant parameter to tune exploration. UCT asymptotically converges to the optimal action-value function $Q^*$, for all states and actions, with the probability of executing a suboptimal action at the root node approaching $0$ with a polynomial rate $O(\frac{1}{t})$, for a simulation budget $t$~\citep{kocsis2006improved,xiao2019maximum}.

\section{Regularized Monte-Carlo Tree Search}
The success of RL methods based on entropy regularization comes from their ability to achieve state-of-the-art performance in decision making and control problems, while enjoying theoretical guarantees and ease of implementation~\citep{haarnoja2018soft,schulman2015trust,lee2018sparse}. However, the use of entropy regularization is MCTS is still mostly unexplored, although its advantageous exploration and value function estimation would be desirable to reduce the detrimental effect of high-branching factor in AlphaGo and AlphaZero. To the best of our knowledge, the MENTS algorithm~\citep{xiao2019maximum} is the first and only method to combine MCTS and entropy regularization. In particular, MENTS uses a maximum entropy regularizer in AlphaGo, proving an exponential convergence rate to the solution of the respective softmax objective function and achieving state-of-the-art performance in some Atari games~\citep{bellemare2013arcade}. In the following, motivated by the success in RL and the promising results of MENTS, we derive a unified theory of regularization in MCTS based on the Legendre-Fenchel transform~\citep{geist2019theory}, that generalizes the use of maximum entropy of MENTS to an arbitrary convex regularizer. Notably, our theoretical framework enables to rigorously motivate the advantages of using maximum entropy and other entropy-based regularizers, such as relative entropy or Tsallis entropy, drawing connections with their RL counterparts TRPO~\citep{schulman2015trust} and Sparse DQN~\citep{lee2018sparse}, as MENTS does with Soft Actor-Critic~(SAC)~\citep{haarnoja2018soft}.
\subsection{Legendre-Fenchel transform}\label{S:leg-fen}
Consider an MDP $\mathcal{M} = \langle \mathcal{S}, \mathcal{A}, \mathcal{R}, \mathcal{P}, \gamma \rangle$, as previously defined. Let $\Omega: \Pi \to \mathbb{R}$ be a strongly convex function. For a policy $\pi_s = \pi(\cdot|s)$ and $Q_s = Q(s,\cdot) \in \mathbb{R}^\mathcal{A}$, the Legendre-Fenchel transform (or convex conjugate) of $\Omega$ is $\Omega^{*}: \mathbb{R}^\mathcal{A} \to \mathbb{R}$, defined as:
\begin{flalign}
\Omega^{*}(Q_s) \triangleq \max_{\pi_s \in \Pi_s} \mathcal{T}_{\pi_s} Q_s - \tau \Omega(\pi_s),\label{E:leg-fen}
\end{flalign}
where the temperature $\tau$ specifies the strength of regularization. Among the several properties of the Legendre-Fenchel transform, we use the following~\citep{mensch2018differentiable,geist2019theory}.
\begin{prop}\label{lb_prop1}
Let $\Omega$ be strongly convex.
\begin{itemize}
\item Unique maximizing argument: $\nabla \Omega^{*}$ is Lipschitz and satisfies
    \begin{flalign}
        \nabla \Omega^{*}(Q_s) = \argmax_{\pi_s \in \Pi_s} \mathcal{T}_{\pi_s} Q_s - \tau \Omega(\pi_s).
    \end{flalign}
\item Boundedness: if there are constants $L_{\Omega}$ and $U_{\Omega}$ such that for all $\pi_s \in \Pi_s$, we have $L_{\Omega} \leq \Omega(\pi_s) \leq U_{\Omega}$, then
    \begin{flalign}
        \max_{a \in \mathcal{A}} Q_s(a) -\tau U_\Omega \leq \Omega^*(Q_s) \leq \max_{a \in \mathcal{A}} Q_s(a) - \tau L_\Omega.
    \end{flalign}
    \item Contraction: for any $Q_1, Q_2 \in \mathbb{R}^{\mathcal{S}\times\mathcal{A}}$
    \begin{flalign}
       \parallel \Omega^{*}(Q_{1}) - \Omega^{*} (Q_{2}) \parallel_{\infty} \leq \gamma \parallel Q_1 - Q_2\parallel_{\infty}.
    \end{flalign}
\end{itemize}
\end{prop}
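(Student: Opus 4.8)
The plan is to establish each of the three properties in turn, exploiting the strong convexity of $\Omega$ and the structure of $\mathcal{T}_{\pi_s} Q_s$ as an affine (indeed linear) function of $\pi_s$. Observe first that for fixed $Q_s$, the map $\pi_s \mapsto \mathcal{T}_{\pi_s} Q_s = \sum_{a} \pi_s(a) \widetilde{Q}_s(a)$ is linear in $\pi_s$ (absorbing the transition/reward structure into an effective vector $\widetilde{Q}_s \in \mathbb{R}^{\mathcal{A}}$), so the objective in \eqref{E:leg-fen} is a strongly concave function of $\pi_s$ over the compact convex simplex $\Pi_s$; hence the maximizer exists and is unique. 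This is exactly the standard setup for the Legendre--Fenchel (convex conjugate) duality, and I would simply cite the properties of conjugates of strongly convex functions \citep{mensch2018differentiable,geist2019theory}, instantiated here.

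For the \textbf{unique maximizing argument}, I would invoke Danskin's theorem (or the envelope theorem): since the objective is jointly continuous, concave in $\pi_s$, and the maximizer $\pi_s^\star(Q_s)$ is unique, $\Omega^*$ is differentiable in $Q_s$ with $\nabla \Omega^*(Q_s)$ equal to the gradient of $\mathcal{T}_{\pi_s} Q_s$ with respect to $Q_s$ evaluated at $\pi_s^\star$ — which reproduces $\pi_s^\star$ itself up to the transition weighting, giving $\nabla \Omega^*(Q_s) = \pi_s^\star = \argmax_{\pi_s} \mathcal{T}_{\pi_s} Q_s - \tau \Omega(\pi_s)$. Lipschitzness of $\nabla \Omega^*$ follows from the standard duality fact that if $\Omega$ is $\mu$-strongly convex (in a chosen norm) then $\Omega^*$ has $(1/(\tau\mu))$-Lipschitz gradient; I would state this with the temperature $\tau$ carried through.

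For \textbf{boundedness}, the argument is elementary: starting from $\Omega^*(Q_s) = \max_{\pi_s} \{\mathcal{T}_{\pi_s} Q_s - \tau \Omega(\pi_s)\}$, I bound $-\tau\Omega(\pi_s)$ between $-\tau U_\Omega$ and $-\tau L_\Omega$ uniformly, and note $\max_{\pi_s} \mathcal{T}_{\pi_s} Q_s = \max_{a} Q_s(a)$ when $Q_s$ plays the role of the per-action values (the maximum of a linear functional over the simplex is attained at a vertex). Combining the two one-sided estimates yields the sandwich $\max_a Q_s(a) - \tau U_\Omega \le \Omega^*(Q_s) \le \max_a Q_s(a) - \tau L_\Omega$.

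Finally, for \textbf{contraction}, I would use the fact that $\Omega^*$ is (up to the temperature rescaling) a convex conjugate and therefore, as the supremum over $\pi_s$ of functions each of which is affine in $Q_s$ with the linear part being a probability distribution, it is nonexpansive in $\|\cdot\|_\infty$: for any fixed $\pi_s$, $|\mathcal{T}_{\pi_s} Q_1 - \mathcal{T}_{\pi_s} Q_2| \le \gamma \|Q_1 - Q_2\|_\infty$ because $\mathcal{T}_{\pi_s}$ averages next-state values with total weight $\gamma$ after the expectation over $\mathcal{P}$ and $\pi_s$; then the standard ``sup of contractions is a contraction'' lemma (taking the maximizing $\pi_s$ for whichever of $Q_1, Q_2$ is larger at the witnessing state) gives $\|\Omega^*(Q_1) - \Omega^*(Q_2)\|_\infty \le \gamma \|Q_1 - Q_2\|_\infty$. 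The main obstacle, and the step requiring the most care, is the first one: making precise the identification of $\mathcal{T}_{\pi_s} Q_s$ with a linear functional $\langle \pi_s, \widetilde{Q}_s\rangle$ so that the clean conjugate-duality machinery applies, and tracking the discount factor $\gamma$ and temperature $\tau$ correctly through the Lipschitz and contraction constants — the rest is bookkeeping on top of well-known Legendre--Fenchel facts.
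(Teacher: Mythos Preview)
Your proposal is sound, but note that the paper does not actually prove Proposition~\ref{lb_prop1}: it is stated as a collection of known properties of the Legendre--Fenchel transform and attributed directly to \citet{mensch2018differentiable} and \citet{geist2019theory}. There is therefore no ``paper's own proof'' to compare against; you have supplied what the paper omits.

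On substance, your arguments are the standard ones and are correct. The unique-maximizer/Danskin step and the strong-convexity $\Rightarrow$ Lipschitz-gradient duality are textbook; the boundedness sandwich is exactly the two-line argument one finds in \citet{geist2019theory}; and the contraction via ``sup over $\pi_s$ of $\gamma$-contractions is a $\gamma$-contraction'' is the right mechanism. Your closing caveat is well placed: the paper's notation is loose about whether $\Omega^*(Q_s)$ denotes the pure conjugate $\max_{\pi_s}\langle \pi_s, Q_s\rangle - \tau\Omega(\pi_s)$ (which would give the boundedness inequality with $\max_a Q_s(a)$ as written but only $1$-nonexpansiveness) or the regularized Bellman optimality operator built from $\mathcal{T}_{\pi_s}$ (which yields the $\gamma$-contraction). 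The three bullets as stated in the paper are really about two slightly different objects, and the way the proposition is \emph{used} in the appendix (e.g.\ in the proof of Theorem~1) mixes both readings. Your identifying this as ``the step requiring the most care'' is exactly right; making the identification $\mathcal{T}_{\pi_s} Q_s = \langle \pi_s, \widetilde{Q}_s\rangle$ explicit, and stating clearly whether the $\gamma$ lives inside $\Omega^*$ or is applied externally in the backup of Equation~(\ref{E:leg-fen-backup}), would make your write-up cleaner than the paper itself.
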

Note that if $\Omega(\cdot)$ is strongly convex, $\tau\Omega(\cdot)$ is also strongly convex; thus all the properties shown in Proposition 1 still hold\footnote{Other works use the same formula, e.g. Equation (\ref{E:leg-fen}) in~\citet{niculae2017regularized}.}.

Solving equation (2) leads to the solution of the optimal primal policy function
$\nabla \Omega^*(\cdot)$. Since $\Omega(\cdot)$ is strongly convex, the dual function $\Omega^*(\cdot)$ is also convex. One can solve the optimization problem~(\ref{E:leg-fen}) in the dual space~\cite{nachum2020reinforcement} as
\begin{align}
    \Omega(\pi_s) = \max_{Q_s \in \mathbb{R}^\mathcal{A}} \mathcal{T}_{\pi_s} Q_s - \tau \Omega^*(Q_s)
\end{align}
and find the solution of the optimal dual value function as $\Omega^*(\cdot)$. Note that the Legendre-Fenchel transform of the value conjugate function is the convex function $\Omega$, i.e. $\Omega^{**} = \Omega$. In the next section, we leverage on this primal-dual connection based on the Legendre-Fenchel transform as both conjugate value function and policy function, to derive the regularized MCTS backup and tree policy.

\subsection{Regularized backup and tree policy}\label{S:leg-fen-mcts}

In MCTS, each node of the tree represents a state $s \in \mathcal{S}$ and contains a visitation count $N(s,a)$. Given a trajectory, we define $n(s_T)$ as the leaf node corresponding to the reached state $s_T$. Let ${s_0, a_0, s_1, a_1...,s_T}$ be the state action trajectory in a simulation, where $n(s_T)$ is a leaf node of $\mathcal{T}$. Whenever a node $n(s_T)$ is expanded, the respective action values (Equation~\ref{E:leg-fen-backup}) are initialized as $Q_{\Omega}(s_T, a) = 0$, and $N(s_T, a) = 0$ for all $a \in \mathcal{A}$. For all nodes in the trajectory, the visitation count is updated by $N(s_t,a_t) = N(s_t,a_t) + 1$, and the action-values by
\begin{equation}
  Q_{\Omega}(s_t,a_t) =
    \begin{cases}
      r(s_t,a_t) + \gamma \rho & \text{if $t = T$}\\
      r(s_t,a_t) + \gamma \Omega^*(Q_\Omega(s_{t+1})/\tau) & \text{if $t < T$}
    \end{cases}\label{E:leg-fen-backup}
\end{equation}
where $Q_\Omega(s_{t+1}) \in \mathbb{R}^\mathcal{A}$ with $Q_\Omega(s_{t+1},a), \forall a \in \mathcal{A}$, and $\rho$ is an estimate returned from an evaluation function computed in $s_T$, e.g. a discounted cumulative reward averaged over multiple rollouts, or the value-function of node $n(s_{T+1})$ returned by a value-function approximator, e.g.\ a neural network pretrained with deep $Q$-learning~\citep{mnih2015human}, as done in~\citep{silver2016mastering,xiao2019maximum}. We revisit the E2W sampling strategy limited to maximum entropy regularization~\citep{xiao2019maximum} and, through the use of the convex conjugate in Equation~(\ref{E:leg-fen-backup}), we derive a novel sampling strategy that generalizes to any convex regularizer
\begin{flalign}
\pi_t(a_t|s_t) = (1 - \lambda_{s_t}) \nabla \Omega^{*} (Q_{\Omega}(s_t)/\tau)(a_t) + \frac{\lambda_{s_t}}{|\mathcal{A}|}, \label{E:e3w}
\end{flalign}
where $\lambda_{s_t} = \nicefrac{\epsilon |\mathcal{A}|}{\log(\sum_a N(s_t,a) + 1)}$ with $\epsilon > 0$ as an exploration parameter, and $\nabla \Omega^{*}$ depends on the measure in use (see Table~\ref{T:reg-operators} for maximum, relative, and Tsallis entropy).
We call this sampling strategy \textit{Extended Empirical Exponential Weight}~(E3W) to highlight the extension of E2W from maximum entropy to a generic convex regularizer. E3W defines the connection to the duality representation using the Legendre-Fenchel transform, that is missing in E2W. Moreover, while the Legendre-Fenchel transform can be used to derive a theory of several state-of-the-art algorithms in RL, such as TRPO, SAC, A3C~\cite{geist:l1pbr}, our result is the first introducing the connection with MCTS.

\subsection{Convergence rate to regularized objective}
We show that the regularized value $V_{\Omega}$ can be effectively estimated at the root state $s \in \mathcal{S}$, with the assumption that each node in the tree has a $\sigma^{2}$-subgaussian distribution. This result extends the analysis provided in~\citep{xiao2019maximum}, which is limited to the use of maximum entropy.\\ 
\begin{manualtheorem}{1}
At the root node $s$ where $N(s)$ is the number of visitations, with $\epsilon > 0$, $V_{\Omega}(s)$ is the estimated value, with constant $C$ and $\hat{C}$, we have
\begin{flalign}
\mathbb{P}(| V_{\Omega}(s) - V^{*}_{\Omega}(s) | > \epsilon) \leq C \exp\{\frac{-N(s) \epsilon}{\hat{C} \sigma \log^2(2 + N(s))}\},
\end{flalign}
\end{manualtheorem}
where $V_{\Omega}(s) = \Omega^*(Q_s)$ and $V^{*}_{\Omega}(s) = \Omega^*(Q^*_s)$.

From this theorem, we obtain that the convergence rate of choosing the best action $a^*$ at the root node, when using the E3W strategy, is exponential.
\begin{manualtheorem}{2}
Let $a_t$ be the action returned by E3W at step $t$. For large enough $t$ and constants $C, \hat{C}$
\begin{flalign}
&\mathbb{P}(a_t \neq a^{*}) \leq C t\exp\{-\frac{t}{\hat{C} \sigma (\log(t))^3}\}.
\end{flalign}
\end{manualtheorem}
This result shows that, for every strongly convex regularizer, the convergence rate of choosing the best action at the root node is exponential, as already proven in the specific case of maximum entropy~\cite{xiao2019maximum}.
\section{Entropy-regularization backup operators}\label{S:algs}
From the introduction of a unified view of generic strongly convex regularizers as backup operators in MCTS, we narrow the analysis to entropy-based regularizers. For each entropy function, Table~\ref{T:reg-operators} shows the Legendre-Fenchel transform and the maximizing argument, which can be respectively replaced in our backup operation~(Equation~\ref{E:leg-fen-backup}) and sampling strategy E3W~(Equation~\ref{E:e3w}). Using maximum entropy retrieves the maximum entropy MCTS problem introduced in the MENTS algorithm~\citep{xiao2019maximum}. This approach closely resembles the maximum entropy RL framework used to encourage exploration~\citep{haarnoja2018soft,schulman2017equivalence}.
We introduce two novel MCTS algorithms based on the minimization of relative entropy of the policy update, inspired by trust-region~\citep{schulman2015trust} and proximal optimization methods~\citep{schulman2017proximal} in RL, and on the maximization of Tsallis entropy, which has been more recently introduced in RL as an effective solution to enforce the learning of sparse policies~\citep{lee2018sparse}. We call these algorithms RENTS and TENTS. Contrary to maximum and relative entropy, the definition of the Legendre-Fenchel and maximizing argument of Tsallis entropy is non-trivial, being
\begin{flalign}
\Omega^*(Q_t) &= \tau \cdot \text{spmax}(Q_t(s,\cdot)/\tau\label{E:leg-fen-tsallis}),\\
\nabla \Omega^{*}(Q_t) &= \max \lbrace\frac{Q_t(s,a)}{\tau} - \frac{\sum_{a \in \mathcal{K}} Q_t(s,a)/\tau - 1}{|\mathcal{K}|}, 0 \rbrace,\label{E:max-arg-tsallis}
\end{flalign}
where spmax is defined for any function $f:\mathcal{S} \times \mathcal{A} \rightarrow \mathbb{R}$ as
\begin{flalign}
\text{spmax}&(f(s,\cdot)) \triangleq \\\nonumber& \sum_{a \in \mathcal{K}} \Bigg( \frac{f(s,a)^2}{2} - \frac{(\sum_{a \in \mathcal{K}} f(s,a) - 1)^2}{2|\mathcal{K}|^2} \Bigg) + \frac{1}{2},
\end{flalign}
and $\mathcal{K}$ is the set of actions that satisfy $1 + if(s,a_i) > \sum_{j=1}^{i}f(s,a_j)$, with $a_i$ indicating the action with the $i$-th largest value of $f(s, a)$~\citep{lee2018sparse}.

\begin{table*}[ht]
\caption{List of entropy regularizers with Legendre-Fenchel transforms and maximizing arguments.}
\centering
\renewcommand*{\arraystretch}{2}
\begin{tabular}{cccc} 
    \toprule
    \textbf{Entropy} &  \textbf{Regularizer} $\Omega(\pi_s)$ & \textbf{Legendre-Fenchel} $\Omega^*(Q_s)$ & \textbf{Max argument} $\nabla\Omega^*(Q_s)$\\ \hline
    \midrule
 Maximum & $\sum_a \pi(a|s) \log \pi(a|s)$ & $\tau \log\sum_a e^{\frac{Q(s,a)}{\tau}}$ &  $\dfrac{e^{\frac{Q(s,a)}{\tau}}}{\sum_b e^{\frac{Q(s,b)}{\tau}}}$\\
 \hline
 Relative & $\text{D}_{\text{KL}}(\pi_{t}(a|s) || \pi_{t-1}(a|s))$ & $\tau \log\sum_a\pi_{t-1}(a|s) e^{\frac{Q_t(s,a)}{\tau}}$ & $\dfrac{\pi_{t-1}(a|s)e^{\frac{Q_t(s,a)}{\tau}}}{\sum_b\pi_{t-1}(b|s)e^{\frac{Q_t(s,b)}{\tau}}}$\\
 \hline
 Tsallis & $\frac{1}{2} (\parallel \pi(a|s) \parallel^2_2 - 1)$ & Equation~(\ref{E:leg-fen-tsallis}) & Equation~(\ref{E:max-arg-tsallis})\\
 \hline
\bottomrule
\end{tabular}\label{T:reg-operators}
\end{table*}

\subsection{Regret analysis}
At the root node, let each children node $i$ be assigned with a random variable $X_i$, with mean value $V_i$, while the quantities related to the optimal branch are denoted by $*$, e.g. mean value $V^{*}$.
At each timestep $n$, the mean value of variable $X_i$ is $V_{i_n}$. 
The pseudo-regret~\citep{coquelin2007bandit} at the root node, at timestep $n$, is defined as $R^{\text{UCT}}_n = nV^{*} - \sum^{n}_{t = 1}V_{i_t}$.
Similarly, we define the regret of E3W at the root node of the tree as
\begin{align}\label{regret}
R_n = nV^{*} - \sum_{t=1}^n V_{i_t} &= nV^{*} - \sum_{t=1}^n \mathbb{I} (i_t = i) V_{i_t} \\\nonumber &= nV^{*} - \sum_i  V_i \sum_{t=1}^n  \hat{\pi}_t(a_i|s),
\end{align}
where $\hat{\pi}_t(\cdot)$ is the policy at time step $t$, and $\mathbb{I}(\cdot)$ is the indicator function.\\
The expected regret is defined as
\begin{align}
\mathbb{E}[R_n] = nV^{*} - \sum_{t=1}^n  \left\langle \hat{\pi}_t(\cdot), V(\cdot)\right\rangle.
\end{align}
\begin{manualtheorem}{3}
Consider an E3W policy applied to the tree. Let define $\mathcal{D}_{\Omega^*}(x,y) = \Omega^*(x) - \Omega^*(y) - \nabla \Omega^* (y) (x - y)$ as the Bregman divergence between $x$ and $y$, The expected pseudo regret $R_n$ satisfies
\begin{flalign}
\mathbb{E}[R_n] \leq& - \tau \Omega(\hat{\pi}) + \sum_{t=1}^n \mathcal{D}_{\Omega^*} (\hat{V_t}(\cdot) +V(\cdot), \hat{V_t}(\cdot))\\ &+ \mathcal{O} (\frac{n}{\log n})\nonumber.
\end{flalign}
\end{manualtheorem}
This theorem bounds the regret of E3W for a generic convex regularizer $\Omega$; the regret bounds for each entropy regularizer can be easily derived from it. Let $m = \min_{a} \nabla \Omega^{*}(a|s)$.

\begin{corollary}Maximum entropy regret:\\ 
$   \mathbb{E}[R_n] \leq \tau (\log |\mathcal{A}|) + \frac{n|\mathcal{A}|}{\tau} + \mathcal{O} (\frac{n}{\log n})\nonumber$.
\end{corollary}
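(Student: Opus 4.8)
The plan is to substitute the negative-entropy regularizer $\Omega(\pi_s)=\sum_a \pi(a|s)\log\pi(a|s)$ into the bound of Theorem 3 and to control the two non-asymptotic terms on its right-hand side separately.

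\emph{First term.} Since $\Omega(\hat\pi)$ is precisely the negative Shannon entropy of the distribution $\hat\pi(\cdot|s)$, and the Shannon entropy over $|\mathcal{A}|$ outcomes is maximized by the uniform distribution, we obtain $-\tau\Omega(\hat\pi)=\tau\,H(\hat\pi)\le \tau\log|\mathcal{A}|$, which is exactly the first summand in the corollary.

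\emph{Bregman term.} From Table~\ref{T:reg-operators}, for maximum entropy the conjugate is the scaled log-sum-exp $\Omega^*(Q)=\tau\log\sum_a e^{Q(s,a)/\tau}$, with $\nabla\Omega^*(Q)=\mathrm{softmax}(Q/\tau)$ and Hessian $\nabla^2\Omega^*(Q)=\tfrac1\tau\big(\mathrm{diag}(p)-pp^\top\big)$ where $p=\mathrm{softmax}(Q/\tau)$. Using the integral form $\mathcal{D}_{\Omega^*}(x,y)=\int_0^1 (1-s)\,(x-y)^\top\nabla^2\Omega^*\!\big(y+s(x-y)\big)(x-y)\,ds$ with $x=\hat V_t(\cdot)+V(\cdot)$ and $y=\hat V_t(\cdot)$, so that $x-y=V(\cdot)$, I would bound the quadratic form via $v^\top(\mathrm{diag}(p)-pp^\top)v=\mathrm{Var}_{a\sim p}[v_a]\le \sum_a v_a^2=\|v\|_2^2$; since the node mean-values lie in a bounded range this yields $\mathcal{D}_{\Omega^*}(\hat V_t+V,\hat V_t)\le \tfrac{|\mathcal{A}|}{\tau}$ uniformly in $t$, hence $\sum_{t=1}^n \mathcal{D}_{\Omega^*}(\hat V_t+V,\hat V_t)\le \tfrac{n|\mathcal{A}|}{\tau}$. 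Adding this to the first term and to the $\mathcal{O}(n/\log n)$ remainder carried over directly from Theorem 3 gives the claimed bound.

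The only mildly delicate point is the constant in the Bregman estimate: the variance inequality $\mathrm{Var}_p[v]\le\|v\|_2^2$ (equivalently the operator-norm bound $\nabla^2\Omega^*\preceq\tfrac1\tau I$, since $\mathrm{diag}(p)-pp^\top\preceq\mathrm{diag}(p)\preceq I$) together with the normalization chosen for rewards/values determines whether the leading coefficient comes out as $\tfrac{n|\mathcal{A}|}{\tau}$ or $\tfrac{n|\mathcal{A}|}{2\tau}$; either way it is absorbed into the statement. No genuinely new estimate is needed beyond Theorem 3 and the smoothness of log-sum-exp — the corollary is a substitution plus the two elementary facts that Shannon entropy is at most $\log|\mathcal{A}|$ and that the conjugate of a strongly convex function has Lipschitz gradient.
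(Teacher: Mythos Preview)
Your proof is correct and essentially self-contained, but the route for the Bregman term differs from the paper's. The paper treats maximum entropy as the $\alpha\to 1$ limit of the generalized Tsallis family $\mathcal{S}_\alpha(\pi)=\tfrac{1}{1-\alpha}\bigl(1-\sum_i\pi^\alpha(a_i|s)\bigr)$: it invokes an external bound (Abernethy et al., 2015) giving $\mathcal{D}_{\Omega^*}(\hat V_t+V,\hat V_t)\le(\tau\alpha)^{-1}|\mathcal{A}|^\alpha$ and $-\Omega(\hat\pi)\le\tfrac{1}{1-\alpha}(|\mathcal{A}|^{1-\alpha}-1)$ for $\alpha\in(0,1)$, and then lets $\alpha\to 1$ to recover $\tau\log|\mathcal{A}|$ and $n|\mathcal{A}|/\tau$. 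You instead compute the log-sum-exp Hessian $\nabla^2\Omega^*=\tfrac{1}{\tau}(\mathrm{diag}(p)-pp^\top)$ directly and use the variance inequality $\mathrm{Var}_p[v]\le\|v\|_2^2$ together with the normalization $V_i\in[-1,0]$ adopted in the proof of Theorem~3. Your argument is more elementary (no external citation, no limiting procedure) and in fact yields the slightly sharper constant $\tfrac{n|\mathcal{A}|}{2\tau}$; the paper's approach has the advantage of handling all Tsallis exponents in one stroke before specializing. The first term is handled identically in both.
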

\begin{corollary}Relative entropy regret:\\
$   \mathbb{E}[R_n] \leq \tau (\log |\mathcal{A}| - \frac{1}{m}) + \frac{n|\mathcal{A}|}{\tau} + \mathcal{O} (\frac{n}{\log n})\nonumber$.
\end{corollary}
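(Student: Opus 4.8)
The plan is to specialize Theorem~3 to the relative-entropy regularizer and then bound, one at a time, the three contributions on its right-hand side, mirroring the derivation of Corollary~1 for maximum entropy. Concretely, I take $\Omega(\pi_s) = D_{\mathrm{KL}}(\pi_t(\cdot|s)\,\|\,\pi_{t-1}(\cdot|s))$ as in Table~\ref{T:reg-operators}, so that Theorem~3 reads $\mathbb{E}[R_n] \le -\tau\Omega(\hat\pi) + \sum_{t=1}^n \mathcal{D}_{\Omega^*}(\hat V_t(\cdot)+V(\cdot),\hat V_t(\cdot)) + \mathcal{O}(n/\log n)$; the $\mathcal{O}(n/\log n)$ term is regularizer-independent and is inherited verbatim, so the work is entirely in the first two terms.

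For the regularizer term I decompose $\Omega(\hat\pi) = D_{\mathrm{KL}}(\hat\pi\,\|\,\pi_{t-1}) = -\mathcal{H}(\hat\pi) - \sum_a \hat\pi(a)\log\pi_{t-1}(a)$, hence $-\tau\Omega(\hat\pi) = \tau\mathcal{H}(\hat\pi) + \tau\sum_a\hat\pi(a)\log\pi_{t-1}(a)$. Bounding the Shannon entropy by its uniform maximum, $\mathcal{H}(\hat\pi)\le\log|\mathcal{A}|$, reproduces the $\tau\log|\mathcal{A}|$ that appears in Corollary~1. The new $-\tau/m$ term comes from the cross-entropy $\tau\sum_a\hat\pi(a)\log\pi_{t-1}(a)$: using a linearization of $\log$ around $1$ together with the fact that the maximizing-argument policy keeps every action mass at least $m = \min_a\nabla\Omega^*(a|s)$ — precisely the quantity that the uniform mixing in E3W is designed to control — one converts this into $\le -1/m$. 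Equivalently, this step replaces the lower bound $L_\Omega$ of the Boundedness item of Proposition~\ref{lb_prop1} (which was $-\log|\mathcal{A}|$ for negative entropy) by the corresponding lower envelope of the relative-entropy regularizer restricted to policies with minimal mass $\ge m$.

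For the Bregman-divergence sum, the key observation is that the convex conjugate of the relative entropy is the $\pi_{t-1}$-tilted log-sum-exp $\Omega^*(Q_t) = \tau\log\sum_a \pi_{t-1}(a|s)e^{Q_t(s,a)/\tau}$, whose gradient is a tilted softmax and whose Hessian has the same $\tfrac1\tau(\mathrm{diag}(p)-pp^{\top})$ form as in the maximum-entropy case. A second-order (mean-value) expansion therefore writes each $\mathcal{D}_{\Omega^*}(\hat V_t+V,\hat V_t)$ as $\tfrac{1}{2\tau}$ times a variance of $V$ under a tilted softmax, which is bounded by a constant times $|\mathcal{A}|/\tau$ using that the value estimates lie in a bounded range; summing over $t$ gives $\sum_{t=1}^n\mathcal{D}_{\Omega^*}(\cdot,\cdot)\le n|\mathcal{A}|/\tau$, exactly as in Corollary~1. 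Combining the three bounds yields $\mathbb{E}[R_n]\le \tau(\log|\mathcal{A}| - 1/m) + n|\mathcal{A}|/\tau + \mathcal{O}(n/\log n)$.

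The main obstacle I anticipate is the cross-entropy estimate in the second step: the trivial route $-\tau\Omega(\hat\pi)\le 0$ (non-negativity of the KL) does not deliver the sharper $-1/m$ correction, so one has to (i) pick the correct elementary inequality relating $\log x$ and $x$, and (ii) argue carefully that $m$ is a legitimate uniform lower bound on the reference policy along the whole trajectory — which is exactly where the structure of E3W (never letting an action mass vanish) is used. The Bregman-divergence bound is comparatively routine; its only delicate point is that the tilted-softmax Hessian degenerates when $\pi_{t-1}$ concentrates, so the variance must be bounded directly by the squared range of $V$ rather than through any inverse-Hessian quantity.
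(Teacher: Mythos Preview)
Your approach is essentially the one the paper takes. The paper's derivation (inside the proof of Theorem~3) is a single line: it writes $\mathrm{KL}(\pi_t\|\pi_{t-1})=-H(\pi_t)-\mathbb{E}_{\pi_t}\log\pi_{t-1}$, sets $m=\min_a\pi_{t-1}(a|s)$, and then states the bound. The Bregman-divergence term is handled exactly as in the maximum-entropy corollary, implicitly via the $\alpha\to 1$ limit of the generalized Tsallis bound $(\tau\alpha)^{-1}|\mathcal{A}|^\alpha$; your observation that the tilted log-sum-exp has the same $\tfrac{1}{\tau}(\mathrm{diag}(p)-pp^\top)$ Hessian is a more explicit justification of the same fact.

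Two small remarks. First, note the definition of $m$: in the paper's appendix proof it is $m=\min_a\pi_{t-1}(a|s)$, i.e.\ the minimal mass of the \emph{reference} policy, whereas the main text (and your proposal) uses $m=\min_a\nabla\Omega^*(a|s)$. These are different quantities; for the decomposition you wrote, it is $\pi_{t-1}$ that appears in the cross-entropy, so the appendix definition is the relevant one. Second, the paper does not supply a more detailed argument for the $-1/m$ correction than you do --- your identification of the cross-entropy step as the delicate point is accurate, and a clean elementary inequality that turns $\sum_a\hat\pi(a)\log\pi_{t-1}(a)$ into exactly $-1/m$ is not spelled out in the paper either.
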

\begin{corollary}Tsallis entropy regret:\\
$\mathbb{E}[R_n] \leq \tau (\frac{|\mathcal{A}| - 1}{|\mathcal{A}|}) + \frac{n|\mathcal{K}|}{2} + \mathcal{O} (\frac{n}{\log n})\nonumber$.\label{cor:regret_tsallis}
\end{corollary}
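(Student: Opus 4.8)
The plan is to instantiate the general regret bound of Theorem~3 with the Tsallis regularizer $\Omega(\pi_s)=\frac{1}{2}(\parallel\pi_s\parallel_2^2-1)$ and to control separately the two nontrivial terms on its right-hand side, carrying the $\mathcal{O}(n/\log n)$ remainder over unchanged. The constant term is immediate: since $\hat{\pi}$ is a probability vector over $|\mathcal{A}|$ actions, Cauchy--Schwarz gives $\frac{1}{|\mathcal{A}|}\le\parallel\hat{\pi}\parallel_2^2\le 1$, so $\Omega(\hat{\pi})\in[-\frac{|\mathcal{A}|-1}{2|\mathcal{A}|},0]$ and hence $-\tau\Omega(\hat{\pi})\le\tau\frac{|\mathcal{A}|-1}{2|\mathcal{A}|}\le\tau\frac{|\mathcal{A}|-1}{|\mathcal{A}|}$; this is just the Tsallis instance of the boundedness constants $L_\Omega\le\Omega\le U_\Omega$ of Proposition~1, in the same way that bounding the Shannon entropy by $\log|\mathcal{A}|$ yields the maximum-entropy corollary.

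The crux is a uniform per-step bound on the Bregman term $\mathcal{D}_{\Omega^*}(\hat{V}_t(\cdot)+V(\cdot),\hat{V}_t(\cdot))$. For the Tsallis conjugate, $\Omega^*(Q)=\tau\,\text{spmax}(Q/\tau)$ and $\nabla\Omega^*(Q)$ is the clipped expression in Equation~(\ref{E:max-arg-tsallis}) --- the Euclidean projection of $Q/\tau$ onto the probability simplex --- which vanishes outside the active set $\mathcal{K}$. On any region where $\mathcal{K}$ is fixed, $\Omega^*$ is the quadratic with Hessian $\frac{1}{\tau}(I_{\mathcal{K}}-\frac{1}{|\mathcal{K}|}\mathbf{1}\mathbf{1}^\top)$ on the coordinates of $\mathcal{K}$, which is positive semidefinite with spectral norm $\le 1/\tau$. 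Writing the Bregman divergence in its integral (mean-value) form and using that only the components of the displacement $(\hat{V}_t(\cdot)+V(\cdot))-\hat{V}_t(\cdot)=V(\cdot)$ lying inside the active set contribute, one gets $\mathcal{D}_{\Omega^*}(\hat{V}_t(\cdot)+V(\cdot),\hat{V}_t(\cdot))\le\frac{1}{2\tau}\parallel V(\cdot)\parallel_{2,\mathcal{K}}^2\le\frac{|\mathcal{K}|}{2\tau}\parallel V(\cdot)\parallel_\infty^2$, which by boundedness of the value functions is at most $|\mathcal{K}|/2$ after absorbing constants. Summing over $t=1,\dots,n$ turns the Bregman sum into $n|\mathcal{K}|/2$, and adding the constant term and the $\mathcal{O}(n/\log n)$ remainder gives the claimed bound.

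The main obstacle is precisely this Bregman estimate. Unlike the smooth log-sum-exp conjugate of the maximum-entropy case, $\Omega^*$ here is only piecewise quadratic and is not twice differentiable along the facets where the active set $\mathcal{K}$ changes, so one cannot invoke a single global Hessian; the argument has to go through the $1/\tau$-smoothness of $\Omega^*$ (equivalently, the $1/\tau$-Lipschitz continuity of $\nabla\Omega^*$) together with a careful accounting of which coordinates of $V(\cdot)$ are reached by the projection, so that the per-step bound is governed by $|\mathcal{K}|$ rather than $|\mathcal{A}|$; obtaining the sharpest constant further requires a bound on the active-set size $|\mathcal{K}_t|$ that is uniform in $t$ along the sampled path.
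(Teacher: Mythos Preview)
Your proposal takes essentially the same route as the paper: instantiate Theorem~3, bound $-\tau\Omega(\hat\pi)$ by the Tsallis range $[-\tfrac{|\mathcal{A}|-1}{2|\mathcal{A}|},0]$, and bound each Bregman term by a second-order (Taylor/mean-value) estimate that produces the $|\mathcal{K}|/2$ factor; the paper does the last step in one line by citing \cite{zimmert2019optimal}, whereas you compute the Hessian $\frac{1}{\tau}(I_{\mathcal{K}}-\frac{1}{|\mathcal{K}|}\mathbf{1}\mathbf{1}^\top)$ explicitly and rightly flag the piecewise-quadratic, non-$C^2$ nature of $\Omega^*$ as the delicate point. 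One caution: your Hessian calculation yields $\mathcal{D}_{\Omega^*}\le \tfrac{|\mathcal{K}|}{2\tau}\|V\|_\infty^2$, so the $1/\tau$ does not simply ``absorb into constants''---it is the same $\tau$ appearing in the first term of the bound---and the paper's displayed $\tfrac{n|\mathcal{K}|}{2}$ tacitly uses $\|V\|_\infty\le 1$ together with a normalization that suppresses this $\tau$; you should either match that normalization or keep the $1/\tau$ explicit rather than hide it.
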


\paragraph{Remarks.} The regret bound of UCT and its variance have already been analyzed for non-regularized MCTS with binary tree~\citep{coquelin2007bandit}. On the contrary, our regret bound analysis in Theorem~3 applies to generic regularized MCTS. From the specialized bounds in the corollaries, we observe that the maximum and relative entropy share similar results, although the bounds for relative entropy are slightly smaller due to $\frac{1}{m}$. Remarkably, the bounds for Tsallis entropy become tighter for increasing number of actions, which translates in limited regret in problems with high branching factor. This result establishes the advantage of Tsallis entropy in complex problems w.r.t. to other entropy regularizers, as empirically confirmed in Section~\ref{S:exps}.

\begin{figure*}
\centering
\includegraphics[scale=.5]{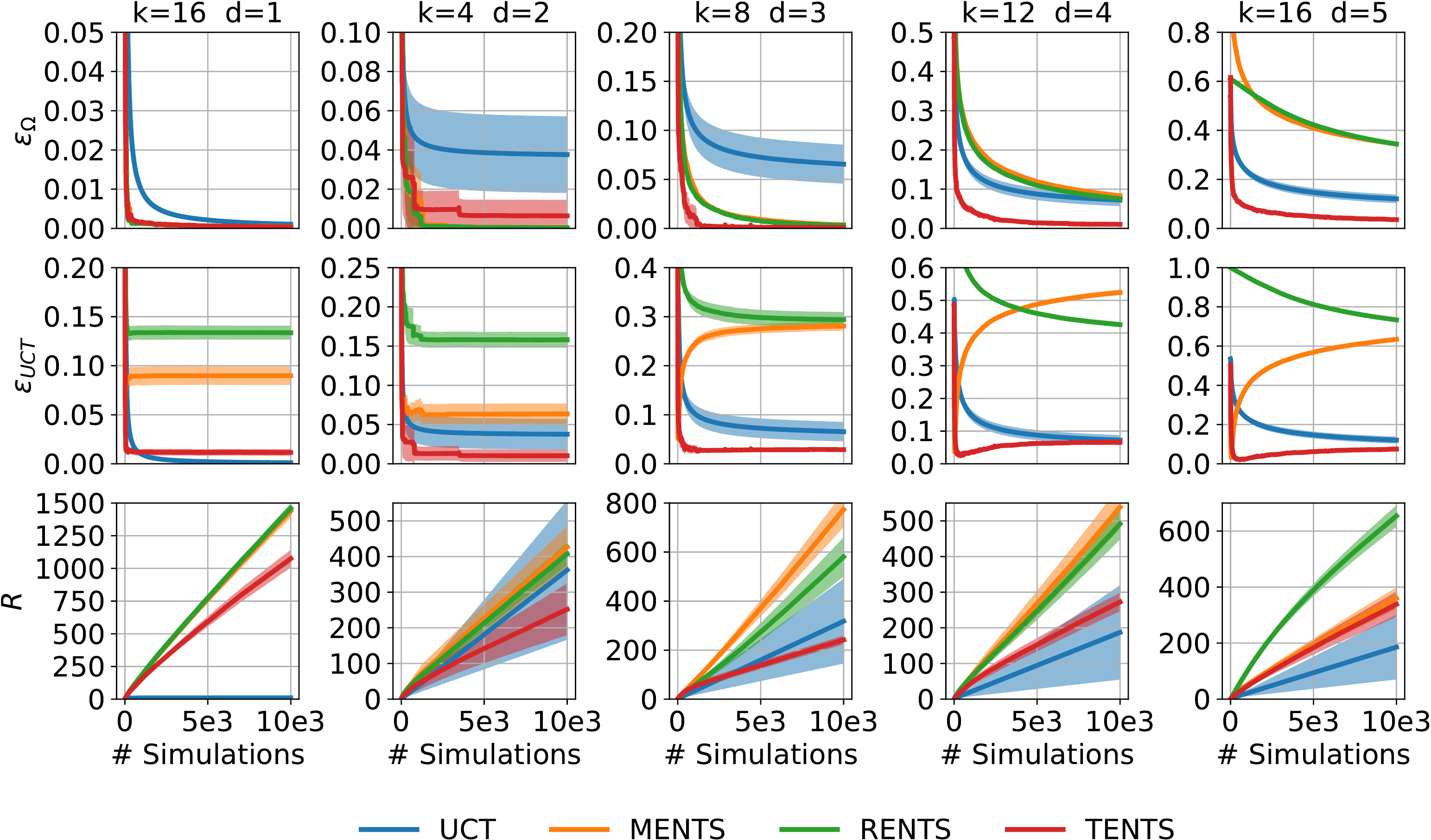}
\caption{For each algorithm, we show the convergence of the value estimate at the root node to the respective optimal value (top), to the UCT optimal value (middle), and the regret (bottom).}
\label{F:synth_plots}
\end{figure*}

\subsection{Error analysis}
We analyse the error of the regularized value estimate at the root node $n(s)$ w.r.t. the optimal value: $\varepsilon_{\Omega} = V_{\Omega}(s) - V^{*}(s)$.

\begin{manualtheorem}{4}
For any $\delta > 0$ and generic convex regularizer $\Omega$, with some constant $C, \hat{C}$, with probability at least $1 - \delta$, $\varepsilon_{\Omega}$ satisfies
\begin{flalign}
&-\sqrt{\frac{\Hat{C}\sigma^2\log\frac{C}{\delta}}{2N(s)}} - \frac{\tau(U_{\Omega} - L_{\Omega})}{1 - \gamma} \leq \varepsilon_{\Omega}  \leq \sqrt{\frac{\Hat{C}\sigma^2\log\frac{C}{\delta}}{2N(s)}}.\label{E:error}
\end{flalign}
\end{manualtheorem}
To the best of our knowledge, this theorem provides the first result on the error analysis of value estimation at the root node of convex regularization in MCTS. To give a better understanding of the effect of each entropy regularizer in Table~\ref{T:reg-operators}, we specialize the bound in Equation~\ref{E:error} to each of them. From~\citep{lee2018sparse}, we know that for maximum entropy $\Omega(\pi_t) = \sum_a \pi_t \log \pi_t$, we have $-\log |\mathcal{A}| \leq \Omega(\pi_t) \leq 0$; for relative entropy $\Omega(\pi_t) = \text{KL}(\pi_t || \pi_{t-1})$, if we define $m = \min_{a} \pi_{t-1}(a|s)$, then we can derive $0 \leq \Omega(\pi_t) \leq -\log |\mathcal{A}| + \log \frac{1}{m}$; and for Tsallis entropy $\Omega(\pi_t) = \frac{1}{2} (\parallel \pi_t \parallel^2_2 - 1)$, we have $- \frac{|\mathcal{A}| - 1}{2|\mathcal{A}|} \leq \Omega(\pi_t) \leq 0$. Then, defining $\Psi = \sqrt{\frac{\Hat{C}\sigma^2\log\frac{C}{\delta}}{2N(s)}}$,
\begin{corollary}Maximum entropy error: \\
$-\Psi - \dfrac{\tau \log |\mathcal{A}|}{1 - \gamma} \leq \varepsilon_{\Omega}  \leq \Psi$.
\end{corollary}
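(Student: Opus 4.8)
The plan is to obtain the corollary as a direct specialization of Theorem~4 to the Shannon (maximum) entropy regularizer $\Omega(\pi_s) = \sum_a \pi(a|s)\log\pi(a|s)$, so the only real work is to identify the boundedness constants $L_\Omega$ and $U_\Omega$ that enter the lower bound of Equation~(\ref{E:error}).

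First I would recall that the negative Shannon entropy $\Omega(\pi_s)$ is a strongly convex function on the probability simplex $\Pi_s$ (so Proposition~\ref{lb_prop1} and Theorem~4 apply), and bound it uniformly over $\pi_s \in \Pi_s$. The maximum of $\Omega(\pi_s)$ is attained at the vertices of the simplex, where $\pi_s$ is deterministic and $\sum_a \pi(a|s)\log\pi(a|s) = 0$ (using the convention $0\log 0 = 0$); the minimum is attained at the uniform distribution, giving $\sum_a \tfrac{1}{|\mathcal{A}|}\log\tfrac{1}{|\mathcal{A}|} = -\log|\mathcal{A}|$. Hence $-\log|\mathcal{A}| \le \Omega(\pi_s) \le 0$ for all $\pi_s$, i.e.\ we may take $L_\Omega = -\log|\mathcal{A}|$ and $U_\Omega = 0$, so $U_\Omega - L_\Omega = \log|\mathcal{A}|$.

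Then I would substitute $U_\Omega - L_\Omega = \log|\mathcal{A}|$ into the bound of Theorem~4. Writing $\Psi = \sqrt{\tfrac{\hat{C}\sigma^2\log(C/\delta)}{2N(s)}}$ as in the statement, the upper bound $\varepsilon_\Omega \le \Psi$ is unchanged (it does not depend on $\Omega$), and the lower bound becomes $-\Psi - \tfrac{\tau\log|\mathcal{A}|}{1-\gamma} \le \varepsilon_\Omega$, which is exactly the claimed inequality. Since Theorem~4 holds with probability at least $1-\delta$, so does the corollary, with the same constants $C,\hat{C}$.

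There is essentially no obstacle here: the corollary is a one-line consequence of Theorem~4 once the entropy bounds are in place. The only point requiring a sentence of care is the evaluation of $\sup$ and $\inf$ of $\Omega$ over the simplex (convexity of $\Omega$ forces its maximum at the vertices and its unique minimum at the uniform point), and the use of the $0\log 0 = 0$ convention so that the deterministic policies are admissible and give $\Omega = 0$; these match the bounds $-\log|\mathcal{A}| \le \Omega(\pi_t) \le 0$ already quoted from \citet{lee2018sparse} in the text preceding the corollary.
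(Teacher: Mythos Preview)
Your proposal is correct and follows exactly the paper's approach: the paper derives the corollary by plugging the bounds $-\log|\mathcal{A}| \le \Omega(\pi_t) \le 0$ (which it quotes from \citet{lee2018sparse}) into Theorem~4, and you do the same, additionally supplying the elementary justification of those bounds on the simplex. There is no further argument in the paper beyond this substitution.
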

\begin{corollary}Relative entropy error: \\
$-\Psi - \dfrac{\tau(\log |\mathcal{A}| - \log \frac{1}{m})}{1 - \gamma} \leq \varepsilon_{\Omega}  \leq \Psi$.
\end{corollary}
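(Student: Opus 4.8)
The plan is to decompose $\varepsilon_\Omega = V_\Omega(s) - V^*(s)$ by inserting the optimal \emph{regularized} value $V^{*}_{\Omega}(s) = \Omega^*(Q^*_s)$ at the root,
\[
\varepsilon_\Omega \;=\; \underbrace{\big(V_\Omega(s) - V^{*}_{\Omega}(s)\big)}_{\text{statistical error}} \;+\; \underbrace{\big(V^{*}_{\Omega}(s) - V^*(s)\big)}_{\text{regularization bias}},
\]
and to bound the two pieces independently. The statistical term is exactly the deviation studied in Theorem~1; to reach the $\tfrac{1}{\sqrt{N(s)}}$ form of $\Psi := \sqrt{\hat C\sigma^2\log(C/\delta)/(2N(s))}$ I would first use the Lipschitz/$\gamma$-contraction property of $\Omega^*$ (Proposition~1) to reduce it to $\gamma\lVert Q_s - Q^*_s\rVert_\infty$, and then bound $\lVert Q_s - Q^*_s\rVert_\infty$ by a Hoeffding/sub-Gaussian concentration over the $N(s)$ leaf evaluations feeding the root, inverting the tail at confidence $\delta$ and absorbing the logarithmic factors into $C,\hat C$. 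This yields the symmetric $\pm\Psi$ envelope, so the asymmetry in \eqref{E:error} must come entirely from the bias term.

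For the bias I would use the boundedness property of the Legendre--Fenchel transform. Writing $\mathcal T^{*}_{\Omega}$ for the regularized optimal Bellman operator (fixed point $V^{*}_{\Omega}$) and $\mathcal T^{*}$ for the ordinary one (fixed point $V^*$), Proposition~1 gives the pointwise sandwich $\mathcal T^{*}V - \tau U_\Omega \le \mathcal T^{*}_{\Omega}V \le \mathcal T^{*}V - \tau L_\Omega$ for every value vector $V$. I then propagate this one-step gap through the recursion: since shifting the argument of $\mathcal T^{*}$ by a constant $c$ shifts its output by $\gamma c$ and $V^*$ is a fixed point, the shifted functions $V^* - \tfrac{\tau U_\Omega}{1-\gamma}$ and $V^* - \tfrac{\tau L_\Omega}{1-\gamma}$ are respectively a sub- and a super-solution of $\mathcal T^{*}_{\Omega}$; monotonicity together with the $\gamma$-contraction of $\mathcal T^{*}_{\Omega}$ squeezes $V^{*}_{\Omega}$ between them, so $\lvert V^{*}_{\Omega}(s) - V^*(s)\rvert \le \tfrac{\tau(U_\Omega - L_\Omega)}{1-\gamma}$, with the side indicated by where the regularizer sits in $[L_\Omega,U_\Omega]$ — this is also exactly what specializes, via the $\Omega$-ranges quoted just before the corollaries, to the per-regularizer bounds. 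A union bound over the single $\delta$-event then combines the two estimates to give \eqref{E:error}.

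The routine parts are the tail inversion for $\Psi$ and the geometric-series bookkeeping $\sum_{k\ge0}\gamma^k = 1/(1-\gamma)$. The main obstacle is the statistical term: the root estimate $V_\Omega(s)$ is not a plain empirical mean but $\Omega^*$ applied to count-weighted backed-up $Q$-values generated by the E3W policy, so the reduction to a clean sub-Gaussian average requires controlling, level by level, how the extra exploration mass $\lambda_{s_t} = \epsilon|\mathcal A|/\log(\sum_a N(s_t,a)+1)$ perturbs the backups — essentially re-deploying the inductive argument behind Theorem~1 and verifying that this mass contributes only lower-order terms. A secondary subtlety is that no explicit depth-truncation term appears in \eqref{E:error}, so one must either assume the leaf evaluation is an unbiased estimate of the (regularized) value there or fold the $O(\gamma^{\mathrm{depth}})$ residual into the constants $C,\hat C$.
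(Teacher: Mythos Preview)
Your decomposition into a statistical term $V_\Omega(s)-V^*_\Omega(s)$ and a regularization-bias term $V^*_\Omega(s)-V^*(s)$ is exactly the route the paper takes in proving Theorem~4, of which this corollary is the relative-entropy specialization obtained by plugging in $L_\Omega=0$, $U_\Omega=\log\frac{1}{m}-\log|\mathcal A|$.

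The only difference is one of packaging. The paper does not re-derive either ingredient inside the proof of Theorem~4: for the statistical term it simply inverts the tail of the already-established Theorem~1 at level $\delta$ to read off $\Psi$, and for the bias it quotes the boundedness statement $V^*(s)-\tfrac{\tau(U_\Omega-L_\Omega)}{1-\gamma}\le V^*_\Omega(s)\le V^*(s)$ from \citet{geist2019theory}. Your sketch instead re-proves both pieces (the inductive/contraction argument for concentration and the sub-/super-solution argument for the bias). That is fine and arguably more self-contained, but the ``main obstacle'' you flag---controlling the E3W exploration mass level by level---is precisely what the proof of Theorem~1 already handles, so in the context of this corollary you can, like the paper, simply cite Theorem~1 rather than redo that induction.
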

\begin{corollary}Tsallis entropy error: \\
$-\Psi - \dfrac{|\mathcal{A}| - 1}{2|\mathcal{A}|} \dfrac{\tau}{1 - \gamma} \leq \varepsilon_{\Omega}  \leq \Psi$.\label{cor:tsallis}
\end{corollary}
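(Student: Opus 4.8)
The plan is to split the error $\varepsilon_\Omega = V_\Omega(s) - V^*(s)$ into a \emph{statistical} term and a \emph{regularization bias} term by inserting the regularized optimal value $V^*_\Omega(s) = \Omega^*(Q^*_s)$:
\[\varepsilon_\Omega = \underbrace{\big(V_\Omega(s) - V^*_\Omega(s)\big)}_{\text{estimation error}} + \underbrace{\big(V^*_\Omega(s) - V^*(s)\big)}_{\text{regularization bias}},\]
bound the two pieces separately, and recombine on a single high-probability event.

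For the estimation error I would reuse the machinery behind Theorem~1: the $\sigma^2$-subgaussian assumption on the leaf evaluations, together with the contraction property of $\Omega^*$ in Proposition~1, lets one propagate a concentration bound from the leaves up to the root by induction on tree depth. Since each backup (Equation~\ref{E:leg-fen-backup}) is $\gamma$-contractive in $\Omega^*$, the per-level fluctuations are geometrically damped, and averaging over the $N(s)$ visits of the root yields, by a Hoeffding/Azuma-type argument, $|V_\Omega(s) - V^*_\Omega(s)| \le \sqrt{\hat{C}\sigma^2 \log(C/\delta)/(2N(s))} = \Psi$ with probability at least $1-\delta$. This supplies the symmetric $\pm\Psi$ part of the claimed bound.

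For the regularization bias I would apply the boundedness property of Proposition~1 recursively along the Bellman recursion. At every state the regularized backup $\Omega^*(Q_s)$ differs from the unregularized backup $\max_a Q_s(a)$ by an amount lying in $[-\tau U_\Omega, -\tau L_\Omega]$, hence by at most $\tau(U_\Omega - L_\Omega)$ in magnitude and with a definite sign relative to the max. Unrolling the fixed-point equations for $V^*_\Omega$ and $V^*$ and summing the per-level gaps as a geometric series with ratio $\gamma$ gives $-\tfrac{\tau(U_\Omega - L_\Omega)}{1-\gamma} \le V^*_\Omega(s) - V^*(s) \le 0$; the one-sided nature of this conclusion (regularization can only pull the value below the unregularized optimum) is exactly what produces the asymmetry between the upper and lower bounds in the statement. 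Combining with the concentration bound from the previous step yields Equation~\ref{E:error}; specializing to the three entropies in the corollaries then just amounts to plugging in the known ranges $[L_\Omega, U_\Omega]$ of negative Shannon entropy, relative entropy, and the Tsallis regularizer.

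The main obstacle is the first step: propagating the subgaussian concentration through the nested value estimates of the tree, since estimates at different depths are statistically dependent and are transformed by the nonlinear operator $\Omega^*$ at each level. The induction must track how the effective variance and the $\log^2$-type factors of Theorem~1 accumulate while invoking the Lipschitz property of $\nabla\Omega^*$ and the contraction of $\Omega^*$ uniformly over strongly convex $\Omega$; this is where the constants $C,\hat{C}$ and the dependence on $\sigma$ enter. The bias step is comparatively routine once the sign of $\Omega^*(Q_s) - \max_a Q_s(a)$ is pinned down from $L_\Omega, U_\Omega$, as the geometric summation over the effective horizon $1/(1-\gamma)$ is standard.
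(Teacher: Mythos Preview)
Your proposal is correct and follows essentially the same route as the paper: decompose $\varepsilon_\Omega$ into the estimation error $V_\Omega(s)-V^*_\Omega(s)$ (controlled by the concentration of Theorem~1, inverted to the high-probability form $|V_\Omega(s)-V^*_\Omega(s)|\le\Psi$) and the regularization bias $V^*_\Omega(s)-V^*(s)$ (controlled by the boundedness result $V^*(s)-\tau(U_\Omega-L_\Omega)/(1-\gamma)\le V^*_\Omega(s)\le V^*(s)$, which the paper quotes from \citet{geist2019theory} rather than re-deriving via the geometric-series argument you sketch), and then specialize to Tsallis by inserting $L_\Omega=-\tfrac{|\mathcal{A}|-1}{2|\mathcal{A}|}$, $U_\Omega=0$. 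The only cosmetic difference is that you re-argue the two ingredients inline, whereas the paper treats them as already established (Theorem~1 and the cited bias bound) so that the corollary is literally one substitution; in particular, what you flag as the ``main obstacle'' is not part of this corollary's proof burden but is the content of Theorem~1.
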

These results show that when the number of actions $|\mathcal{A}|$ is large, TENTS enjoys the smallest error; moreover, we also see that lower bound of RENTS is always smaller than for MENTS.

\section{Empirical evaluation}\label{S:exps}
In this section, we empirically evaluate the benefit of the proposed entropy-based MCTS regularizers. First, we complement our theoretical analysis with an empirical study of the synthetic tree toy problem introduced in~\citet{xiao2019maximum}, which serves as a simple scenario to give an interpretable demonstration of the effects of our theoretical results in practice. Second, we compare to AlphaGo~\citep{silver2016mastering}, recently introduced to enable MCTS to solve large scale problems with high branching factor. Our implementation is a simplified version of the original algorithm, where we remove various tricks in favor of better interpretability. For the same reason, we do not compare with the most recent and state-of-the-art MuZero~\citep{schrittwieser2019mastering}, as this is a slightly different solution highly tuned to maximize performance, and a detailed description of its implementation is not available.

\begin{figure*}
\centering
\subfigure[\label{F:heat_reg}]{\includegraphics[scale=.5]{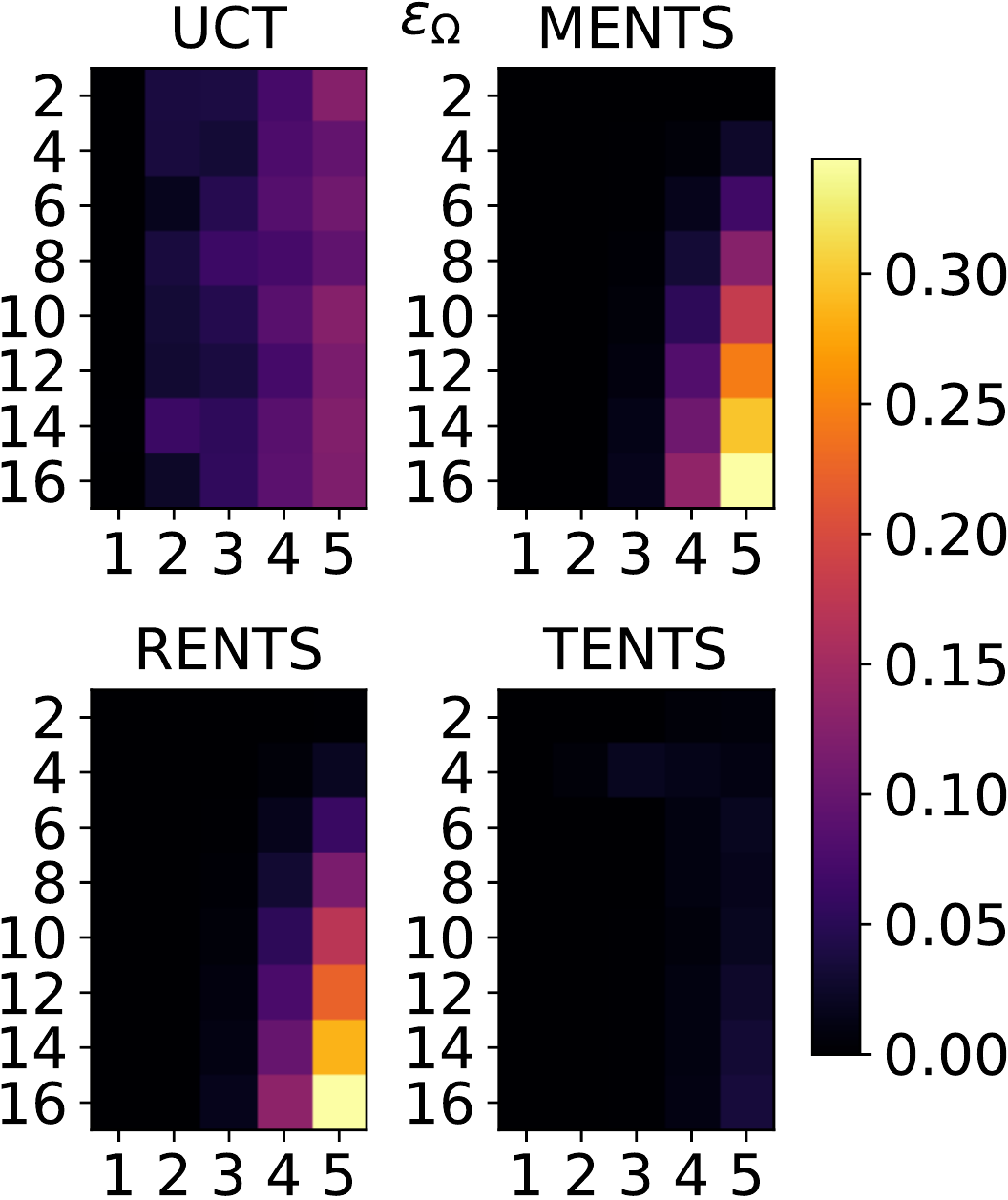}}
\subfigure[\label{F:heat_uct}]{\includegraphics[scale=.5]{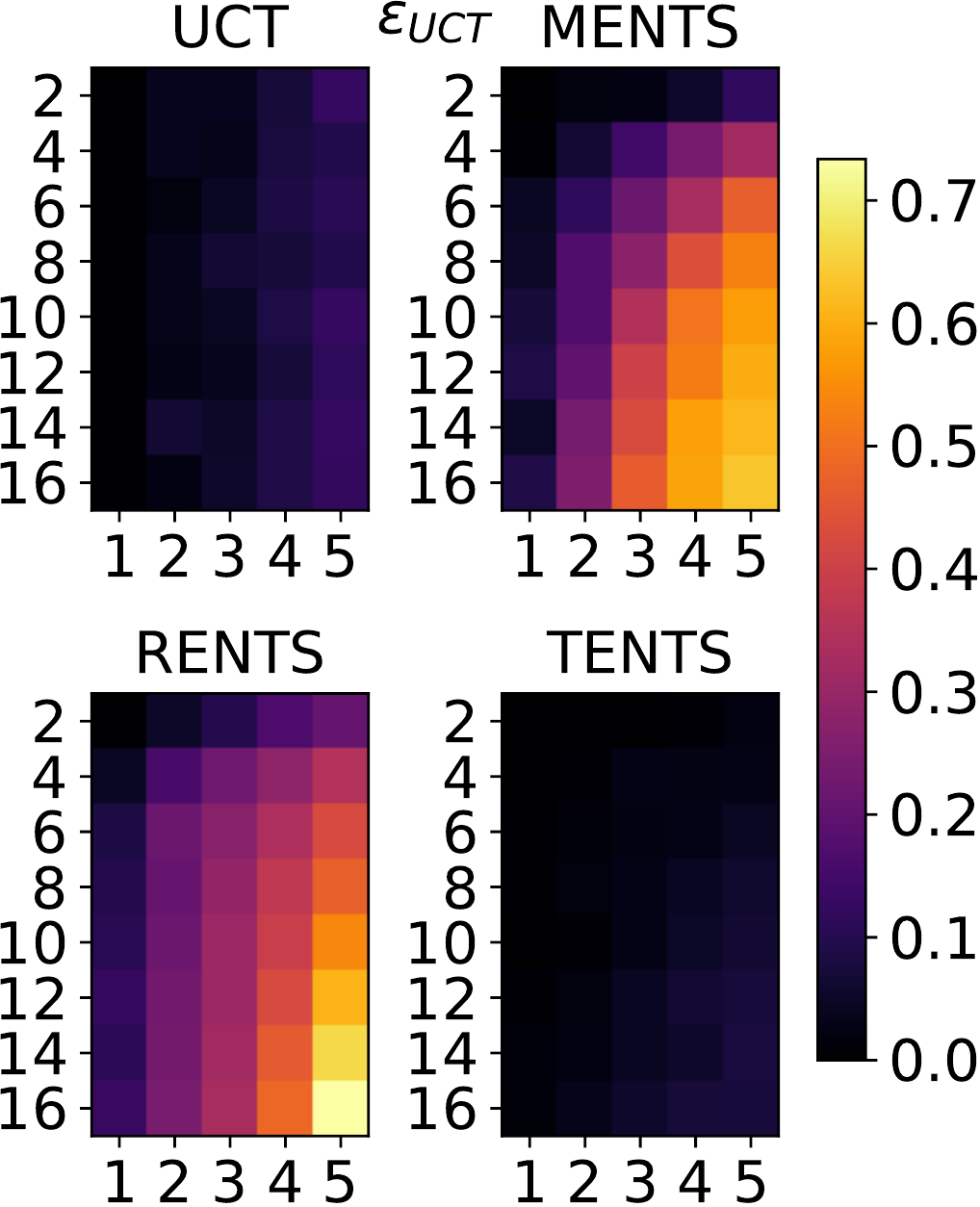}}
\subfigure[\label{F:heat_regret}]{\includegraphics[scale=.5]{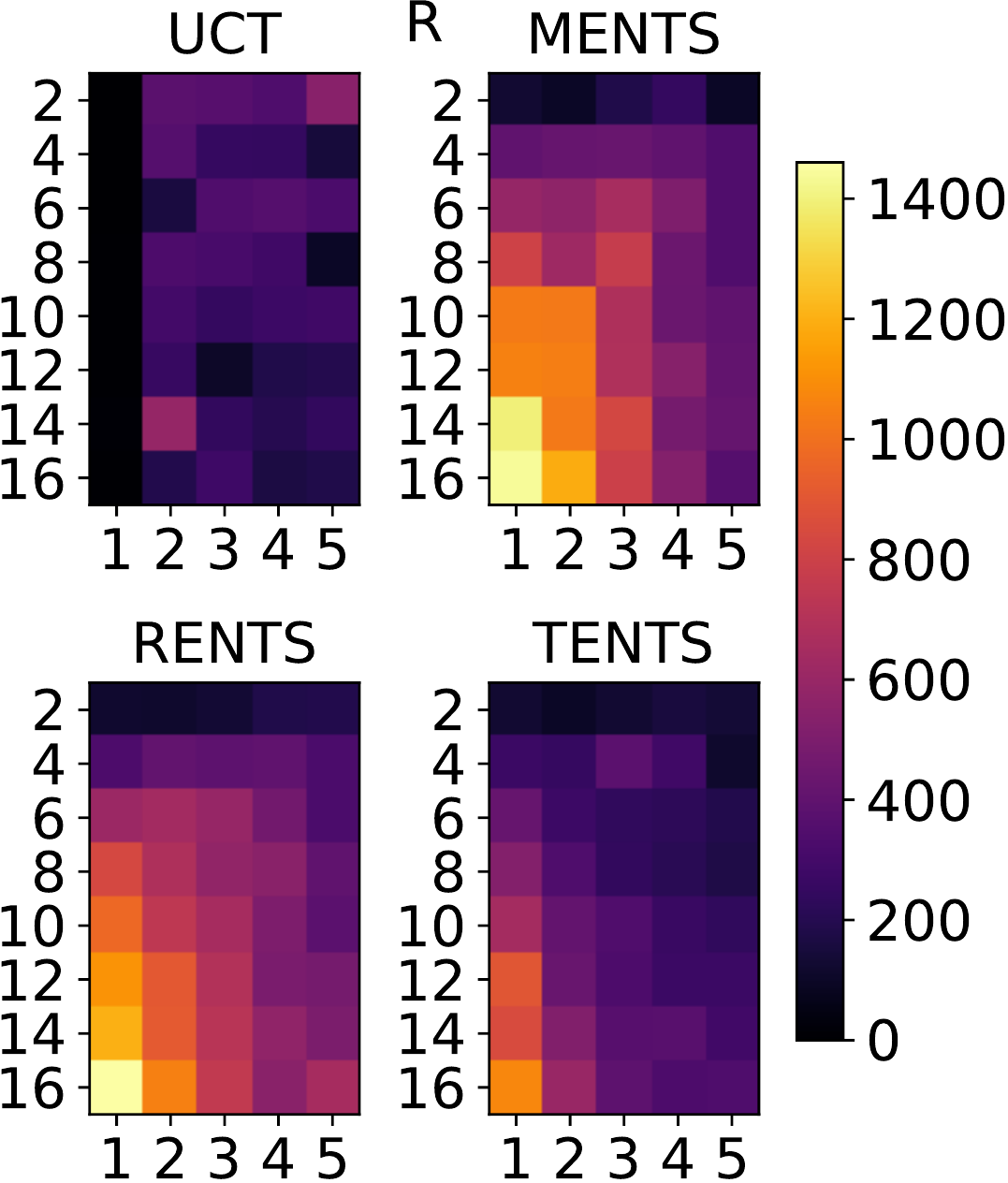}}
\caption{For different branching factor $k$ (rows) and depth $d$ (columns), the heatmaps show: the absolute error of the value estimate at the root node after the last simulation of each algorithm w.r.t. the respective optimal value (a), and w.r.t. the optimal value of UCT (b); regret at the root node (c).}
\label{F:heatmaps}
\end{figure*}

\subsection{Synthetic tree}
This toy problem is introduced in~\citet{xiao2019maximum} to highlight the improvement of MENTS over UCT. It consists of a tree with branching factor $k$ and depth $d$. Each edge of the tree is assigned a random value between $0$ and $1$. At each leaf, a Gaussian distribution is used as an evaluation function resembling the return of random rollouts. The mean of the Gaussian distribution is the sum of the values assigned to the edges connecting the root node to the leaf, while the standard deviation is $\sigma=0.05$\footnote{The value of the standard deviation is not provided in~\citet{xiao2019maximum}. After trying different values, we observed that our results match the one in~\citet{xiao2019maximum} when using $\sigma=0.05$.}. For stability, all the means are normalized between $0$ and $1$. As in~\citet{xiao2019maximum}, we create $5$ trees on which we perform $5$ different runs in each, resulting in $25$ experiments, for all the combinations of branching factor $k = \lbrace 2, 4, 6, 8, 10, 12, 14, 16 \rbrace$ and depth $d = \lbrace 1, 2, 3, 4, 5 \rbrace$, computing: (i) the value estimation error at the root node w.r.t. the regularized optimal value: $\varepsilon_\Omega = V_\Omega - V^*_\Omega$; (ii) the value estimation error at the root node w.r.t. the unregularized optimal value: $\varepsilon_\text{UCT} = V_\Omega - V^*_{\text{UCT}}$; (iii) the regret $R$ as in Equation~(\ref{regret}). For a fair comparison, we use fixed $\tau = 0.1$ and $\epsilon=0.1$ across all algorithms. Figure~\ref{F:synth_plots} and~\ref{F:heatmaps} show how UCT and each regularizer behave for different configurations of the tree. We observe that, while RENTS and MENTS converge slower for increasing tree sizes, TENTS is robust w.r.t. the size of the tree and almost always converges faster than all other methods to the respective optimal value. Notably, the optimal value of TENTS seems to be very close to the one of UCT, i.e. the optimal value of the unregularized objective, and also converges faster than the one estimated by UCT, while MENTS and RENTS are considerably further from this value. In terms of regret, UCT explores less than the regularized methods and it is less prone to high regret, at the cost of slower convergence time. Nevertheless, the regret of TENTS is the smallest between the ones of the other regularizers, which seem to explore too much. These results show a general superiority of TENTS in this toy problem, also confirming our theoretical findings about the advantage of TENTS in terms of approximation error~(Corollary~\ref{cor:tsallis}) and regret~(Corollary~\ref{cor:regret_tsallis}), in problems with many actions.

\subsection{Entropy-regularized AlphaGo}
\begin{table*}[t]
\fontsize{9}{8.5}\selectfont
\caption{Average score in Atari over $100$ seeds per game. Bold denotes no statistically significant difference to the highest mean (t-test, $p < 0.05$). Bottom row shows \# no difference to highest mean.}
\label{pocman_table}
\centering
\smallskip
\renewcommand*{\arraystretch}{1.5}
\begin{tabular}{lcccccc}\hline
 & $\text{UCT}$ & $\text{MaxMCTS}$ & $\text{MENTS}$ & $\text{RENTS}$ & $\text{TENTS}$ \\
\toprule
\cline{1-6}
Alien& $\mathbf{1,486.80}$& $\mathbf{1,461.10}$& $\mathbf{1,508.60}$& $\mathbf{1,547.80}$& $\mathbf{1,568.60}$\\ \cline{1-6}
Amidar& $115.62$& $\mathbf{124.92}$& $\mathbf{123.30}$& $\mathbf{125.58}$& $\mathbf{121.84}$\\ \cline{1-6}
Asterix& $4,855.00$& $\mathbf{5,484.50}$& $\mathbf{5,576.00}$& $\mathbf{5,743.50}$& $\mathbf{5,647.00}$\\ \cline{1-6}
Asteroids& $873.40$& $899.60$& $1,414.70$& $1,486.40$& $\mathbf{1,642.10}$\\ \cline{1-6}
Atlantis& $35,182.00$& $\mathbf{35,720.00}$& $\mathbf{36,277.00}$& $35,314.00$& $\mathbf{35,756.00}$\\ \cline{1-6}
BankHeist& $475.50$& $458.60$& $\mathbf{622.30}$& $\mathbf{636.70}$& $\mathbf{631.40}$\\ \cline{1-6}
BeamRider& $\mathbf{2,616.72}$& $\mathbf{2,661.30}$& $\mathbf{2,822.18}$& $2,558.94$& $\mathbf{2,804.88}$\\ \cline{1-6}
Breakout& $\mathbf{303.04}$& $296.14$& $\mathbf{309.03}$& $300.35$& $\mathbf{316.68}$\\ \cline{1-6}
Centipede& $1,782.18$& $1,728.69$& $\mathbf{2,012.86}$& $\mathbf{2,253.42}$& $\mathbf{2,258.89}$\\ \cline{1-6}
DemonAttack& $579.90$& $640.80$& $\mathbf{1,044.50}$& $\mathbf{1,124.70}$& $\mathbf{1,113.30}$\\ \cline{1-6}
Enduro& $\mathbf{129.28}$& $124.20$& $128.79$& $\mathbf{134.88}$& $\mathbf{132.05}$\\ \cline{1-6}
Frostbite& $1,244.00$& $1,332.10$& $\mathbf{2,388.20}$& $\mathbf{2,369.80}$& $\mathbf{2,260.60}$\\ \cline{1-6}
Gopher& $3,348.40$& $3,303.00$& $\mathbf{3,536.40}$& $\mathbf{3,372.80}$& $\mathbf{3,447.80}$\\ \cline{1-6}
Hero& $3,009.95$& $3,010.55$& $\mathbf{3,044.55}$& $\mathbf{3,077.20}$& $\mathbf{3,074.00}$\\ \cline{1-6}
MsPacman& $1,940.20$& $1,907.10$& $2,018.30$& $\mathbf{2,190.30}$& $\mathbf{2,094.40}$\\ \cline{1-6}
Phoenix& $2,747.30$& $2,626.60$& $3,098.30$& $2,582.30$& $\mathbf{3,975.30}$\\ \cline{1-6}
Qbert& $7,987.25$& $8,033.50$& $8,051.25$& $8,254.00$& $\mathbf{8,437.75}$\\ \cline{1-6}
Robotank& $\mathbf{11.43}$& $11.00$& $\mathbf{11.59}$& $\mathbf{11.51}$& $\mathbf{11.47}$\\ \cline{1-6}
Seaquest& $\mathbf{3,276.40}$& $\mathbf{3,217.20}$& $\mathbf{3,312.40}$& $\mathbf{3,345.20}$& $\mathbf{3,324.40}$\\ \cline{1-6}
Solaris& $895.00$& $923.20$& $\mathbf{1,118.20}$& $\mathbf{1,115.00}$& $\mathbf{1,127.60}$\\ \cline{1-6}
SpaceInvaders& $778.45$& $\mathbf{835.90}$& $\mathbf{832.55}$& $\mathbf{867.35}$& $\mathbf{822.95}$\\ \cline{1-6}
WizardOfWor& $685.00$& $666.00$& $\mathbf{1,211.00}$& $\mathbf{1,241.00}$& $\mathbf{1,231.00}$\\ \cline{1-6}
\bottomrule
\textbf{\# Highest mean} & $6/22$& $7/22$& $17/22$& $16/22$& $\textbf{22/22}$\\\cline{1-6}
\bottomrule
\end{tabular}\label{T:atari}
\end{table*}

The learning time of AlphaZero can be slow in problems with high branching factor, due to the need of a large number of MCTS simulations for obtaining good estimates of the randomly initialized action-values. To overcome this problem, AlphaGo~\citep{silver2016mastering} initializes the action-values using the values retrieved from a pretrained network, which is kept fixed during the training.

\textbf{Atari.} Atari 2600~\citep{bellemare2013arcade} is a popular benchmark for testing deep RL methodologies~\citep{mnih2015human,van2016deep,bellemare2017distributional} but still relatively disregarded in MCTS. We use a Deep $Q$-Network, pretrained using the same experimental setting of~\citet{mnih2015human}, to initialize the action-value function of each node after expansion as $Q_{init}(s,a) = \left(Q(s,a) - V(s)\right)/\tau$, for MENTS and TENTS, as done in~\citet{xiao2019maximum}. For RENTS we init $Q_{init}(s,a) = \log P_{\text{prior}}(a|s)) + \left(Q(s,a) - V(s)\right)/\tau$, where $P_{\text{prior}}$ is the Boltzmann distribution induced by action-values $Q(s,.)$ computed from the network. Each experimental run consists of $512$ MCTS simulations. The temperature $\tau$ is optimized for each algorithm and game via grid-search between $0.01$ and $1$. The discount factor is $\gamma = 0.99$, and for PUCT the exploration constant is $c = 0.1$. Table~\ref{T:atari} shows the performance, in terms of cumulative reward, of standard AlphaGo with PUCT and our three regularized versions, on $22$ Atari games. Moreover, we test also AlphaGo using the MaxMCTS backup~\citep{khandelwal2016analysis} for further comparison with classic baselines. We observe that regularized MCTS dominates other baselines, in particular TENTS achieves the highest scores in all the $22$ games, showing that sparse policies are more effective in Atari. In particular, TENTS significantly outperforms the other methods in the games with many actions, e.g. Asteroids, Phoenix, confirming the results obtained in the synthetic tree experiment, explained by corollaries~\ref{cor:regret_tsallis} and~\ref{cor:tsallis} on the benefit of TENTS in problems with high-branching factor.

\section{Related Work}
Entropy regularization is a common tool for controlling exploration in
Reinforcement Learning (RL) and has lead to several successful methods
\citep{schulman2015trust,haarnoja2018soft,schulman2017equivalence,mnih2016asynchronous}. Typically specific
forms of entropy are utilized such as maximum entropy
\citep{haarnoja2018soft} or relative
entropy~\citep{schulman2015trust}. This approach is an instance of the more generic duality framework, commonly used in convex optimization theory. Duality has been extensively studied in game theory~\citep{shalev2006convex,pavel2007duality} and more recently in RL, for instance considering mirror descent optimization~\citep{montgomery2016guided,mei2019principled}, drawing the connection between MCTS and regularized policy optimization~\citep{grill2020monte}, or formalizing the RL objective via Legendre-Rockafellar duality~\citep{nachum2020duality}. Recently~\citep{geist2019theory}
introduced regularized Markov Decision Processes, formalizing the RL objective with a generalized form of convex regularization, based on the Legendre-Fenchel transform. In this paper, we provide a novel study of convex regularization in MCTS, and derive relative entropy (KL-divergence) and Tsallis entropy regularized MCTS algorithms, i.e. RENTS and TENTS respectively. Note that the recent maximum entropy MCTS algorithm MENTS~\citep{xiao2019maximum} is a
special case of our generalized regularized MCTS. Unlike MENTS, RENTS
can take advantage of any action distribution prior, in the
experiments the prior is derived using Deep $Q$-learning~\citep{mnih2015human}. On the other
hand, TENTS allows for sparse action exploration and thus higher dimensional
action spaces compared to MENTS.

Several works focus on modifying classical MCTS to improve
exploration. UCB1-tuned \citep{auer2002finite} modifies the upper
confidence bound of UCB1 to account for variance in order to improve
exploration. \citep{tesauro2012bayesian} proposes a Bayesian version of
UCT, which obtains better estimates of node values and uncertainties
given limited experience. Many heuristic approaches based on specific
domain knowledge have been proposed, such as adding a bonus term to
value estimates~\citep{gelly2006exploration,teytaud2010huge,childs2008transpositions,kozelek2009methods,chaslot2008progressive}
or prior knowledge collected during policy
search~\citep{gelly2007combining,helmbold2009all,lorentz2010improving,tom2010investigating,hoock2010intelligent}.
\citep{khandelwal2016analysis} formalizes and analyzes different
on-policy and off-policy complex backup approaches for MCTS planning
based on RL techniques. \citep{vodopivec2017monte} proposes an approach
called SARSA-UCT, which performs the dynamic programming backups using
SARSA~\citep{rummery1995problem}. Both \citep{khandelwal2016analysis}
and \citep{vodopivec2017monte} directly borrow value backup ideas from
RL to estimate the value at each tree node, but they do not provide any proof of convergence.

\section{Conclusion}
We introduced a theory of convex regularization in Monte-Carlo Tree Search~(MCTS) based on the Legendre-Fenchel transform. We proved that a generic strongly convex regularizer has an exponential convergence rate for the selection of the optimal action at the root node. Our result gives theoretical motivations to previous results specific to maximum entropy regularization. Furthermore, we provided the first study of the regret of MCTS when using a generic strongly convex regularizer, and an analysis of the error between the regularized value estimate at the root node and the optimal regularized value. We use these results to motivate the use of entropy regularization in MCTS, considering maximum, relative, and Tsallis entropy, and we specialized our regret and approximation error bounds to each entropy-regularizer. We tested our regularized MCTS algorithm in a simple toy problem, where we give an empirical evidence of the effect of our theoretical bounds for the regret and approximation error. Finally, we introduced the use of convex regularization in AlphaGo, and carried out experiments on several Atari games. Overall, our empirical results show the advantages of convex regularization, and in particular the superiority of Tsallis entropy w.r.t. other entropy-regularizers.
\bibliography{conv_reg}
\bibliographystyle{icml2021}

\appendix
\onecolumn

\section{Proofs}\label{A:proofs}
In this section, we describe how to derive the theoretical results presented in the paper.

First, the exponential convergence rate of the estimated value function to the conjugate regularized value function at the root node (Theorem 1) is derived based on induction with respect to the depth $D$ of the tree. When $D = 1$, we derive the concentration of the average reward at the leaf node with respect to the $\infty$-norm (as shown in Lemma 1) based on the result from Theorem 2.19 in \cite{wainwright2019high}, and the induction is done over the tree by additionally exploiting the contraction property of the convex regularized value function. Second, based on Theorem 1, we prove the exponential convergence rate of choosing the best action at the root node (Theorem 2). Third, the pseudo-regret analysis of E3W is derived based on the Bregman divergence properties and the contraction properties of the Legendre-Fenchel transform (Proposition 1). Finally, the bias error of estimated value at the root node is derived based on results of Theorem 1, and the boundedness property of the Legendre-Fenchel transform (Proposition 1).

Let $\hat{r}$ and $r$ be respectively the average and the the expected reward at the leaf node, and the reward distribution at the leaf node be $\sigma^2$-sub-Gaussian.
\begin{lemma}
For the stochastic bandit problem E3W guarantees that, for $t\geq 4$,
\begin{flalign}
\mathbb{P}\big( \parallel r - \hat{r}_t\parallel_{\infty} \geq \frac{2\sigma}{\log(2 + t)}\big) \leq 4 |\mathcal{A}| \exp\Big(-\frac{t}{(\log (2 + t))^3}\Big). \nonumber
\end{flalign}
\end{lemma}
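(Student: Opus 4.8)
The plan is to control the deviation of the empirical mean reward $\hat{r}_t$ from its expectation $r$ coordinate-wise (i.e., per action $a \in \mathcal{A}$), and then take a union bound over the $|\mathcal{A}|$ actions to pass from the per-coordinate bound to the $\infty$-norm bound. The key difficulty is that, unlike a fixed-sample-size concentration statement, the number of times $N_t(a)$ that action $a$ has been pulled by the E3W sampling strategy up to time $t$ is itself random and adaptively chosen. Hence I cannot directly apply a sub-Gaussian tail inequality with a deterministic sample size; I need a lower bound on $N_t(a)$ that holds with high probability, so that conditionally on $N_t(a)$ being at least that many pulls, the standard sub-Gaussian concentration (Theorem 2.19 in \cite{wainwright2019high}) applies.

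First I would establish that E3W forces enough exploration of every arm. Recall from Equation~(\ref{E:e3w}) that $\pi_t(a|s) \geq \frac{\lambda_{s_t}}{|\mathcal{A}|}$ with $\lambda_{s_t} = \nicefrac{\epsilon |\mathcal{A}|}{\log(\sum_a N(s_t,a)+1)}$, so at each step every arm is selected with probability at least $\frac{\epsilon}{\log(t+1)}$. Summing these lower bounds and applying a concentration inequality for sums of (dependent) Bernoulli indicators — e.g., a Bernstein- or Azuma-type bound on the martingale $\sum_{k\le t}(\mathbb{I}(a_k=a) - \pi_k(a|s))$ — yields that, with high probability, $N_t(a) \gtrsim \frac{\epsilon t}{\log(t+1)}$ for all $a$ simultaneously. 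This "enough visitations" step is the main obstacle: one must carefully track the logarithmic factors, since they are exactly what produces the $(\log(2+t))^3$ in the denominator of the final exponent and the $\log(2+t)$ in the deviation threshold, and one must handle the adaptivity of $\pi_k$ correctly via a martingale argument rather than treating the pulls as i.i.d.

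Then, conditioning on the event $\{N_t(a) \geq c\,\epsilon t / \log(2+t) \text{ for all } a\}$, I would apply the sub-Gaussian tail bound to each coordinate: for a fixed arm with $n$ i.i.d.\ (or, more carefully, conditionally sub-Gaussian) samples, $\mathbb{P}(|r(a) - \hat{r}_t(a)| \geq u) \leq 2\exp(-n u^2 / 2\sigma^2)$. Setting $u = \frac{2\sigma}{\log(2+t)}$ and $n = c\,\epsilon t/\log(2+t)$ gives an exponent of order $-t/(\log(2+t))^3$. Combining the failure probability of the exploration event with the union bound over $|\mathcal{A}|$ arms, and absorbing constants, yields
\begin{flalign}
\mathbb{P}\big( \parallel r - \hat{r}_t\parallel_{\infty} \geq \tfrac{2\sigma}{\log(2 + t)}\big) \leq 4|\mathcal{A}|\exp\Big(-\tfrac{t}{(\log(2+t))^3}\Big), \nonumber
\end{flalign}
for $t \geq 4$, where the threshold $t\geq 4$ is what is needed to make $\log(2+t)$ large enough that the bookkeeping constants collapse into the clean form stated. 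The structure mirrors the maximum-entropy argument of \cite{xiao2019maximum}, with the only change being that the exploration lower bound on $\pi_t$ is now read off from the generic E3W formula rather than the softmax-specific E2W formula, so nothing about this step depends on the particular regularizer $\Omega$.
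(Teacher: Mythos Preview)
Your proposal is correct and follows essentially the same route as the paper: lower-bound the expected number of pulls $\hat N_t(a)=\sum_{s\le t}\pi_s(a)$ via the uniform exploration term of E3W, use a Hoeffding/Azuma-type martingale concentration (the paper invokes Theorem~2.19 of \cite{wainwright2019high}) to transfer this to the actual count $N_t(a)$, condition on $N_t(a)\gtrsim t/\log(2+t)$, apply the per-arm sub-Gaussian tail with threshold $2\sigma/\log(2+t)$, and union-bound over $\mathcal{A}$. Your observation that the argument is regularizer-agnostic (only the $\lambda_s/|\mathcal{A}|$ term matters) is exactly the point of the paper's proof as well.
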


\begin{proof}
Let us define $N_t(a)$ as the number of times action $a$ have been chosen until time $t$, and $\hat{N_t}(a) = \sum^{t}_{s=1} \pi_{s}(a)$, where $\pi_{s}(a)$ is the E3W policy at time step $s$. By choosing $\lambda_s = \frac{|\mathcal{A}|}{\log (1 + s)}$, it follows that for all $a$ and $t \geq 4$,
\begin{flalign}
\hat{N_t}(a) &= \sum^{t}_{s=1} \pi_{s}(a) \geq \sum^{t}_{s=1}\frac{1}{\log(1+s)} \geq \sum^{t}_{s=1} \frac{1}{\log(1 + s)} - \frac{s/(s+1)}{(\log(1 + s))^2} \nonumber\\
&\geq \int_1^{1+t} \frac{1}{\log(1+s)} - \frac{s/(s+1)}{(\log(1 + s))^2}ds = \frac{1+t}{\log(2+t)} - \frac{1}{\log 2} \geq \frac{t}{2\log(2+t)}\nonumber.
\end{flalign}
From Theorem 2.19 in~\cite{wainwright2019high}, we have the following concentration inequality:
\begin{flalign}
\mathbb{P} (|N_t(a) - \hat{N}_t(a)| > \epsilon) \leq 2 \exp\{-\frac{\epsilon^2}{2\sum_{s=1}^t \sigma_s^2}\} \leq 2\exp\{-\frac{2\epsilon^2}{t}\} \nonumber,
\end{flalign}
where $\sigma^2_s \leq 1/4$ is the variance of a Bernoulli distribution with $p = \pi_s(k)$ at time step s. We define the event
\begin{flalign}
E_{\epsilon} = \{\forall a \in \mathcal{A}, |\hat{N_t}(a) - N_t(a)| \leq \epsilon\} \nonumber,
\end{flalign}
and consequently
\begin{flalign}
\mathbb{P} (|\hat{N_t}(a) - N_t(a)| \geq \epsilon) \leq 2|\mathcal{A}|\exp(-\frac{2\epsilon^2}{t}). \label{lb_subgaussion} 
\end{flalign}
Conditioned on the event $E_{\epsilon}$, for $\epsilon = \frac{t}{4\log(2+t)}$, we have $N_t(a) \geq \frac{t}{4\log(2+t)}$. 
For any action a by the definition of sub-gaussian,
\begin{flalign}
&\mathbb{P}\Bigg( |r(a) - \hat{r}_t(a)| > \sqrt{\frac{8\sigma^2\log(\frac{2}{\delta})\log(2+t)}{t}}\Bigg) \leq \mathbb{P}\Bigg( |r(a) - \hat{r}_t(a)| > \sqrt{\frac{2\sigma^2\log(\frac{2}{\delta})}{N_t(a)}}\Bigg) \leq \delta \nonumber
\end{flalign}
by choosing a $\delta$ satisfying $\log(\frac{2}{\delta}) = \frac{1}{(\log(2+t))^3}$, we have
\begin{flalign}
\mathbb{P}\Bigg( |r(a) - \hat{r}_t(a)| > \sqrt{\frac{2\sigma^2\log(\frac{2}{\delta})}{N_t(a)}}\Bigg) \leq 2 \exp\Bigg( -\frac{1}{(\log(2+t))^3} \Bigg) \nonumber.
\end{flalign}
Therefore, for $t \geq 2$
\begin{flalign}
&\mathbb{P}\Bigg( \parallel r - \hat{r}_t \parallel_{\infty} > \frac{2\sigma}{ \log(2 + t)} \Bigg) \leq \mathbb{P}\Bigg( \parallel r - \hat{r}_t \parallel_{\infty} > \frac{2\sigma}{ \log(2 + t)} \Bigg| E_{\epsilon} \Bigg) + \mathbb{P}(E^C_{\epsilon}) \nonumber\\
&\leq \sum_{k} \Bigg( \mathbb{P}\Bigg( |r(a) - \hat{r}_t(a)| > \frac{2\sigma}{ \log(2 + t)}\Bigg) + \mathbb{P}(E^C_{\epsilon}) \leq 2|\mathcal{A}| \exp\Bigg( -\frac{1}{(\log(2+t))^3} \Bigg) \Bigg) \nonumber\\
& + 2|\mathcal{A}| \exp\Bigg( -\frac{t}{(\log(2+t))^3}\Bigg) = 4|\mathcal{A}| \exp\Bigg( -\frac{t}{(\log(2+t))^3}\Bigg) \nonumber.
\end{flalign}
\end{proof}

\begin{lemma}
Given two policies $\pi^{(1)} = \nabla \Omega^{*}(r^{(1)})$ and $\pi^{(2)} = \nabla \Omega^{*}(r^{(2)}), \exists L$, such that 
\begin{flalign}
\parallel \pi^{(1)} - \pi^{(2)} \parallel_{p} \leq L \parallel r^{(1)} - r^{(2)} \parallel_{p} \nonumber.
\end{flalign}
\end{lemma}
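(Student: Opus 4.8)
The statement is nothing more than a quantitative, $\normlp$-flavoured version of the Lipschitzness of $\nabla\Omega^*$ already asserted in Proposition~\ref{lb_prop1}. The plan is to exploit the classical duality between strong convexity of $\Omega$ and Lipschitz continuity of the gradient of its conjugate $\Omega^*$, obtaining the bound first in the Euclidean norm, and then to pass to an arbitrary $\normlp$ norm using the fact that $\mathcal{A}$ is finite, so that all norms on $\mathbb{R}^\mathcal{A}$ are equivalent with constants depending only on $p$ and $|\mathcal{A}|$.

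Concretely, since $\Omega$ (hence $\tau\Omega$, by the footnote after Proposition~\ref{lb_prop1}) is strongly convex on the simplex, say with modulus $\mu>0$ with respect to $\parallel\cdot\parallel_2$, I would use the first-order optimality characterization of the Legendre--Fenchel problem~(\ref{E:leg-fen}): because the term in $\pi_s$ being regularized is linear, $\pi^{(i)} = \nabla\Omega^*(r^{(i)})$ is characterized by $r^{(i)}$ belonging to the subdifferential of $\tau\Omega$ restricted to the simplex at $\pi^{(i)}$ (the normal-cone part of the simplex and the additive linear term cancel in the comparison below). Strong convexity then yields the strong-monotonicity inequality
\begin{flalign}
\langle r^{(1)} - r^{(2)},\, \pi^{(1)} - \pi^{(2)} \rangle \geq \mu \parallel \pi^{(1)} - \pi^{(2)} \parallel_2^2 . \nonumber
\end{flalign}
Applying Cauchy--Schwarz to the left-hand side and dividing by $\parallel \pi^{(1)} - \pi^{(2)} \parallel_2$ gives $\parallel \pi^{(1)} - \pi^{(2)} \parallel_2 \leq \tfrac{1}{\mu}\parallel r^{(1)} - r^{(2)} \parallel_2$, i.e.\ the claim for $p=2$ with $L = 1/\mu$.

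Finally, for general $p$ I would invoke norm equivalence on $\mathbb{R}^\mathcal{A}$: there are constants $c_p, C_p>0$ depending only on $p$ and $|\mathcal{A}|$ with $c_p\parallel x\parallel_2 \leq \parallel x\parallel_p \leq C_p\parallel x\parallel_2$ for all $x$, whence
\begin{flalign}
\parallel \pi^{(1)} - \pi^{(2)} \parallel_p \leq C_p \parallel \pi^{(1)} - \pi^{(2)} \parallel_2 \leq \frac{C_p}{\mu}\parallel r^{(1)} - r^{(2)} \parallel_2 \leq \frac{C_p}{\mu\, c_p}\parallel r^{(1)} - r^{(2)} \parallel_p, \nonumber
\end{flalign}
proving the lemma with $L = C_p/(\mu\, c_p)$. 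The only delicate point — and thus the main obstacle — is pinning down the strong-convexity modulus and the reference norm of $\Omega$ on the simplex (and checking that the linear term and the simplex constraint genuinely drop out of the optimality comparison), rather than any deep argument; as a sanity check, for the concrete regularizers of Table~\ref{T:reg-operators} one can instead read $L$ off directly from the closed forms of $\nabla\Omega^*$ (softmax, normalized Boltzmann, sparsemax), each of which is manifestly Lipschitz.
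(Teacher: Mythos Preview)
Your argument is correct. The monotonicity step (from strong convexity of $\tau\Omega$ on the simplex to $\langle r^{(1)}-r^{(2)},\pi^{(1)}-\pi^{(2)}\rangle\geq\mu\|\pi^{(1)}-\pi^{(2)}\|_2^2$) and the subsequent Cauchy--Schwarz and norm-equivalence passage are all standard and sound; the optimality conditions over the simplex do cancel in the way you indicate once one writes the two variational inequalities and adds them.

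The paper, however, takes a much shorter route: it simply invokes the Lipschitzness of $\nabla\Omega^*$ already recorded in Proposition~\ref{lb_prop1} and defers the norm-specific discussion to \cite{niculae2017regularized}, noting that the relevant $p$ depends on the regularizer (e.g.\ $\ell^\infty$ for softmax, $\ell^2$ for sparsemax). In other words, the paper treats the lemma as a restatement of a known fact rather than something to be derived. Your approach is genuinely more self-contained: you reprove the Lipschitz property from strong convexity instead of citing it, and you obtain a single constant $L$ valid for \emph{any} $p$ via norm equivalence. The trade-off is that your constant $L=C_p/(\mu c_p)$ is dimension-dependent and typically looser than the norm-adapted constants one gets by analysing each regularizer in its ``natural'' $p$-norm, which is what the paper's cited reference provides. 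For the purposes of the downstream Lemma~\ref{lm_sum_pi} (where only the existence of some $L$ matters and the $p$-norm is immediately converted to $\ell^\infty$ anyway), either route suffices.
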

\begin{proof}
This comes directly from the fact that $\pi = \nabla \Omega^{*}(r)$ is Lipschitz continuous with $\ell^p$-norm. Note that $p$ has different values according to the choice of regularizer. Refer to~\cite{niculae2017regularized} for a discussion of each norm using maximum entropy and Tsallis entropy regularizer. Relative entropy shares the same properties with maximum Entropy.
\end{proof}
\begin{lemma}\label{lm_sum_pi}
Consider the E3W policy applied to a tree. At any node $s$ of the tree with depth $d$, Let us define $N_t^{*}(s, a) = \pi^{*}(a|s) . t$, and $\hat{N_t}(s, a) = \sum_{s=1}^{t}\pi_s(a|s)$, where $\pi_k(a|s)$  is the policy at time step $k$. There exists some $C$ and $\hat{C}$ such that
\begin{flalign}
\mathbb{P}\big( |\hat{N_t}(s, a) - N_t^{*}(s, a)| > \frac{Ct}{\log t}\big) \leq \hat{C} |\mathcal{A}| t \exp\{-\frac{t}{(\log t)^3}\} \nonumber.
\end{flalign}
\end{lemma}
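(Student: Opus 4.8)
The plan is to reduce the claim to the concentration of the running action-value estimate at $s$ and then to Theorem~1. Write $\pi^{*}(a|s)=\nabla\Omega^{*}(Q^{*}_{\Omega}(s)/\tau)(a)$ for the limiting regularized policy and let $Q_{\Omega}^{(k)}(s)$ denote the action-value vector stored at $s$ after its $k$-th update, so that the policy $\pi_k(\cdot|s)$ in Equation~(\ref{E:e3w}) is built from $Q_{\Omega}^{(k)}(s)$ with mixing coefficient $\lambda_{s}^{(k)}=\nicefrac{\epsilon|\mathcal{A}|}{\log(N_k(s)+1)}$. By the triangle inequality,
\begin{flalign}
|\hat N_t(s,a)-N_t^{*}(s,a)|\le\sum_{k=1}^{t}|\pi_k(a|s)-\pi^{*}(a|s)|,\nonumber
\end{flalign}
and, adding and subtracting $\nabla\Omega^{*}(Q_{\Omega}^{(k)}(s)/\tau)(a)$, each summand splits as
\begin{flalign}
|\pi_k(a|s)-\pi^{*}(a|s)|\le\lambda_{s}^{(k)}\Big|\tfrac{1}{|\mathcal{A}|}-\nabla\Omega^{*}(Q_{\Omega}^{(k)}(s)/\tau)(a)\Big|+\big|\nabla\Omega^{*}(Q_{\Omega}^{(k)}(s)/\tau)(a)-\nabla\Omega^{*}(Q^{*}_{\Omega}(s)/\tau)(a)\big|.\nonumber
\end{flalign}

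The first (exploration) term is handled deterministically: it is at most $\lambda_{s}^{(k)}$, and since the visit count at $s$ increases by one at each of its updates, $\sum_{k=1}^{t}\lambda_{s}^{(k)}=\mathcal{O}(t/\log t)$ via $\sum_{j}\nicefrac{1}{\log(j+1)}=\mathcal{O}(t/\log t)$; this contribution is absorbed into the $\nicefrac{Ct}{\log t}$ on the left-hand side. For the second (estimation) term I would invoke Lemma~2: $\nabla\Omega^{*}$ is $L$-Lipschitz, hence this term is bounded by $\tfrac{L}{\tau}\|Q_{\Omega}^{(k)}(s)-Q^{*}_{\Omega}(s)\|_{\infty}$, which reduces the whole statement to controlling $\sum_{k\le t}\|Q_{\Omega}^{(k)}(s)-Q^{*}_{\Omega}(s)\|_{\infty}$.

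This is exactly where Theorem~1 enters, whose proof is by induction on the depth $d$ of the subtree rooted at $s$: the base case $d=1$ follows from Lemma~1 applied to the leaf evaluations feeding $s$, and the inductive step combines the contraction property of $\Omega^{*}$ (Proposition~1) with the present lemma applied one level deeper, giving that $\|Q_{\Omega}^{(k)}(s)-Q^{*}_{\Omega}(s)\|_{\infty}$ concentrates at rate $\mathcal{O}(1/\log(2+k))$ with failure probability of the form $\hat C|\mathcal{A}|\exp\{-k/(\log k)^{3}\}$ after absorbing depth-dependent polylog factors into $C,\hat C$. I would then define the good event $\mathcal{E}_t=\bigcap_{k=1}^{t}\{\|Q_{\Omega}^{(k)}(s)-Q^{*}_{\Omega}(s)\|_{\infty}\le c/\log(2+k)\}$; on $\mathcal{E}_t$ the estimation terms sum to $\mathcal{O}(t/\log t)$ again via $\sum_k 1/\log(2+k)=\mathcal{O}(t/\log t)$, so the event $\{|\hat N_t(s,a)-N_t^{*}(s,a)|>\nicefrac{Ct}{\log t}\}$ is contained in $\mathcal{E}_t^{c}$ for a suitable $C$, and a union bound over the at most $t$ updates yields $\mathbb{P}(\mathcal{E}_t^{c})\le\hat C|\mathcal{A}|t\exp\{-t/(\log t)^{3}\}$, which is the claim.

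The step I expect to be the main obstacle is the recursive bookkeeping inside the induction: the number of updates $N_k(s)$ of a node at depth $d$ is itself random, so one must first argue — using that the E3W exploration probability at every node is at least $\nicefrac{\epsilon}{\log(\cdot)}$, as in the estimate preceding Lemma~1 — that $s$ is updated at least of order $t/\mathrm{polylog}(t)$ times with overwhelming probability before the sub-Gaussian concentration of $Q_{\Omega}^{(k)}(s)$ can be invoked at the stated rate; and one must carefully track the additional logarithmic factors picked up at each level so that the exponent of the final bound stays $t/(\log t)^{3}$ rather than degrading with the depth.
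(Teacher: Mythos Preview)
Your decomposition matches the paper exactly up to the point where you control the estimation term $\big|\nabla\Omega^{*}(Q_{\Omega}^{(k)}(s)/\tau)(a)-\nabla\Omega^{*}(Q^{*}_{\Omega}(s)/\tau)(a)\big|$: both you and the paper use the triangle inequality, separate off the $\lambda$-exploration term and sum it to $\mathcal{O}(t/\log t)$, then invoke the Lipschitz property of $\nabla\Omega^{*}$ (Lemma~2) to reduce to $\|Q_{\Omega}^{(k)}(s)-Q^{*}_{\Omega}(s)\|$.

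The divergence is in how that $Q$-error is controlled. You propose to invoke Theorem~1, and you correctly notice that this forces a mutual induction (Theorem~1 at depth $d$ $\leftrightarrow$ the present lemma at depth $d-1$), which is exactly the ``main obstacle'' you flag: random visit counts, polylog losses per level, and the risk that the exponent degrades with depth. The paper sidesteps all of this. Instead of appealing to Theorem~1, it iterates the contraction property of $\Omega^{*}$ (Proposition~1) $d$ times to push the $Q$-error at $s$ \emph{directly} down to the leaf reward error:
\[
\|\hat Q_k(s,\cdot)-Q(s,\cdot)\|_{\infty}\ \le\ \gamma^{d}\,\|\hat r_k(s'',\cdot)-r(s'',\cdot)\|_{\infty},
\]
and then defines the good event at the \emph{leaves}, $E_{r_k}=\{\|\hat r_k-r\|_{\infty}<2\sigma/\log(2+k)\}$, rather than at $s$ in terms of $Q$. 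On $\bigcap_k E_{r_k}$ the estimation sum is bounded by $L|\mathcal{A}|^{1/p}\gamma^{d}\sum_k 2\sigma/\log(2+k)=\mathcal{O}(t/\log t)$, and the failure probability comes straight from Lemma~1 via a union bound over $k\le t$.

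So your route is not wrong in principle, but it is circular as stated (Theorem~1 uses Lemma~4, which uses the present lemma), and even recast as a simultaneous induction it is substantially heavier than needed. The paper's direct contraction-to-leaves argument eliminates precisely the bookkeeping you were worried about: no induction on depth is required for this lemma, the depth enters only as the harmless constant $\gamma^{d}$, and the random visit-count issue never arises because the good event is defined in terms of leaf statistics governed by Lemma~1 alone.
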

\begin{proof}
We denote the following event,
\begin{flalign}
E_{r_k} = \{ \parallel r(s',\cdot) - \hat{r}_k(s',\cdot) \parallel_{\infty} < \frac{2 \sigma }{\log (2 + k)}\} \nonumber.
\end{flalign}
Thus, conditioned on the event $\bigcap_{i=1}^{t} E_{r_t}$ and for $t \geq 4$, we bound $|\hat{N_t}(s,a) - N^{*}_t(s,a)|$ as
\begin{flalign}
|\hat{N_t}(s, a) - N^{*}_t(s, a)| &\leq \sum_{k=1}^{t} |\hat{\pi}_k(a|s) - \pi^{*}(a|s)| + \sum_{k=1}^{t}\lambda_k \nonumber\\
&\leq \sum_{k=1}^{t} \parallel \hat{\pi}_k(\cdot|s) - \pi^{*}(\cdot|s) \parallel_{\infty} + \sum_{k=1}^{t}\lambda_k \nonumber\\
&\leq \sum_{k=1}^{t} \parallel \hat{\pi}_k(\cdot|s) - \pi^{*}(\cdot|s) \parallel_{p} + \sum_{k=1}^{t}\lambda_k \nonumber\\
&\leq L \sum_{k=1}^{t} \parallel \hat{Q}_k(s',\cdot) - Q(s',\cdot) \parallel_{p} + \sum_{k=1}^{t}\lambda_k (\text{Lemma 2}) \nonumber\\
&\leq L|\mathcal{A}|^{\frac{1}{p}} \sum_{k=1}^{t} \parallel \hat{Q}_k(s',\cdot) - Q(s',\cdot) \parallel_{\infty} + \sum_{k=1}^{t}\lambda_k (\text{ Property of $p$-norm}) \nonumber\\
&\leq L|\mathcal{A}|^{\frac{1}{p}} \gamma^{d}\sum_{k=1}^{t} \parallel \hat{r}_k(s'',\cdot) - r(s'',\cdot) \parallel_{\infty} + \sum_{k=1}^{t}\lambda_k (\text{Contraction~\ref{S:leg-fen}})\nonumber\\
&\leq L|\mathcal{A}|^{\frac{1}{p}} \gamma^{d} \sum_{k=1}^{t} \frac{2\sigma}{\log(2+k)} + \sum_{k=1}^{t}\lambda_k \nonumber\\
&\leq L|\mathcal{A}|^{\frac{1}{p}} \gamma^{d}\int_{k=0}^{t} \frac{2\sigma}{\log(2+k)} dk + \int_{k=0}^{t} \frac{|\mathcal{A}|}{\log(1 + k)} dk \nonumber\\
&\leq \frac{Ct}{\log t} \nonumber.
\end{flalign}
for some constant $C$ depending on $|\mathcal{A}|, p, d, \sigma, L$, and $\gamma$  . Finally,
\begin{flalign}
\mathbb{P}(|\hat{N_t}(s,a) - N^{*}_t(s,a)| \geq \frac{Ct}{\log t}) &\leq \sum_{i=1}^{t} \mathbb{P}(E^c_{r_t}) = \sum_{i=1}^{t} 4|\mathcal{A}| \exp (-\frac{t}{(\log(2 + t))^3}) \nonumber\\
&\leq 4|\mathcal{A}| t\exp (-\frac{t}{(\log(2 + t))^3}) \nonumber\\
&= O (t\exp (-\frac{t}{(\log(t))^3})) \nonumber.
\end{flalign}
\end{proof}

\begin{lemma}
Consider the E3W policy applied to a tree. At any node $s$ of the tree, Let us define $N_t^{*}(s, a) = \pi^{*}(a|s) . t$, and $N_t(s, a)$ as the number of times action $a$ have been chosen until time step $t$. There exists some $C$ and $\hat{C}$ such that
\begin{flalign}
\mathbb{P}\big( |N_t(s, a) - N_t^{*}(s, a)| > \frac{Ct}{\log t}\big) \leq \hat{C} t \exp\{-\frac{t}{(\log t)^3}\} \nonumber.
\end{flalign}
\end{lemma}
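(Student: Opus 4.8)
The plan is to reduce the concentration statement about the actual visitation counts $N_t(s,a)$ to the already-established concentration statement about the ``soft'' counts $\hat{N}_t(s,a)$ from Lemma~\ref{lm_sum_pi}, via the triangle inequality
\begin{flalign}
|N_t(s,a) - N_t^{*}(s,a)| \leq |N_t(s,a) - \hat{N}_t(s,a)| + |\hat{N}_t(s,a) - N_t^{*}(s,a)| \nonumber.
\end{flalign}
Lemma~\ref{lm_sum_pi} already controls the second term: it is at most $\frac{Ct}{\log t}$ except on an event of probability $O(t\exp(-t/(\log t)^3))$. So it remains to control the first term, the gap between the realized count $N_t(s,a)=\sum_{k=1}^t \mathbb{I}(a_k = a)$ and its conditional mean $\hat{N}_t(s,a)=\sum_{k=1}^t \hat{\pi}_k(a|s)$.

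First I would observe that $N_t(s,a) - \hat{N}_t(s,a)$ is a sum of bounded martingale differences: letting $\mathcal{F}_k$ be the filtration generated by the first $k$ simulations, the increment $\mathbb{I}(a_k=a) - \hat{\pi}_k(a|s)$ is $\mathcal{F}_k$-measurable given $\mathcal{F}_{k-1}$, has conditional mean zero, and is bounded in $[-1,1]$ (indeed its conditional variance is that of a Bernoulli with parameter $\hat{\pi}_k(a|s)$, hence at most $1/4$). This is exactly the setting of the concentration inequality (Theorem 2.19 in~\cite{wainwright2019high}) already invoked in the proof of Lemma~1, so I would apply it — or equivalently Azuma--Hoeffding — to get, for any $\epsilon > 0$,
\begin{flalign}
\mathbb{P}\big(|N_t(s,a) - \hat{N}_t(s,a)| > \epsilon\big) \leq 2\exp\Big\{-\frac{2\epsilon^2}{t}\Big\} \nonumber.
\end{flalign}
Choosing $\epsilon = \frac{C't}{\log t}$ for a suitable constant makes the right-hand side $2\exp\{-\frac{2C'^2 t}{(\log t)^2}\}$, which is dominated by $\hat{C}t\exp\{-t/(\log t)^3\}$ for large $t$, so this term is absorbed into the stated bound.

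Combining the two pieces: on the complement of the union of the two bad events, $|N_t(s,a) - N_t^{*}(s,a)| \leq \frac{Ct}{\log t} + \frac{C't}{\log t} = \frac{(C+C')t}{\log t}$, and a union bound on the failure probabilities gives a total of order $\hat{C}t\exp\{-t/(\log t)^3\}$ after adjusting constants. Renaming $C+C' \to C$ and enlarging $\hat{C}$ yields the claim. I expect the only mild subtlety to be bookkeeping: making sure the martingale-difference structure is stated cleanly (the policy $\hat{\pi}_k$ depends on the past through the empirical action-values, but is $\mathcal{F}_{k-1}$-measurable, so the increments are genuine martingale differences) and checking that the exponent $(\log t)^2$ obtained here is indeed beaten by the $(\log t)^3$ in Lemma~\ref{lm_sum_pi}'s bound so that the latter is the binding term. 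No step requires new ideas beyond those already used for Lemmas~1 and~\ref{lm_sum_pi}.
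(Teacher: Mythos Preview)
Your proposal is correct and follows essentially the same route as the paper: split $|N_t - N_t^*|$ via the triangle inequality through $\hat N_t$, invoke Lemma~\ref{lm_sum_pi} for $|\hat N_t - N_t^*|$, and use the same martingale/sub-Gaussian concentration already used in Lemma~1 (the paper's inequality~(\ref{lb_subgaussion})) for $|N_t - \hat N_t|$ with $\epsilon \asymp t/\log t$, then union-bound. The paper's proof is exactly this decomposition with the specific choice $\epsilon = t/\log t$ for the martingale term, so your write-up matches it up to the naming of constants.
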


\begin{proof}
Based on the result from Lemma \ref{lm_sum_pi}, we have
\begin{align}
& \mathbb{P}\big( |N_t(s, a) - N_t^{*}(s, a)| > (1 + C)\frac{t}{\log t}\big) \leq C t \exp\{-\frac{t}{(\log t)^3}\} \nonumber\\
& \leq \mathbb{P}\big( |\hat{N}_t(s, a) - N_t^{*}(s, a)| > \frac{Ct}{\log t}\big) + \mathbb{P}\big( |N_t(s, a) - \hat{N}_t(s, a)| > \frac{t}{\log t}\big) \nonumber\\
& \leq 4|\mathcal{A}|t \exp\{-\frac{t}{(\log(2 + t))^3}\} + 2|\mathcal{A}| \exp\{-\frac{t}{(\log(2 + t))^2}\} (\text{Lemma } \ref{lm_sum_pi} \text{ and } (\ref{lb_subgaussion})) \nonumber\\
& \leq O(t\exp(-\frac{t}{(\log t)^3})) \nonumber.
\end{align}
\end{proof}

\begin{manualtheorem}{1}
At the root node $s$ of the tree, defining $N(s)$ as the number of visitations and $V_{\Omega^*}(s)$ as the estimated value at node s, for $\epsilon > 0$, we have
\begin{flalign}
\mathbb{P}(| V_{\Omega}(s) - V^{*}_{\Omega}(s) | > \epsilon) \leq C \exp\{-\frac{N(s) \epsilon }{\hat{C}(\log(2 + N(s)))^2}\} \nonumber.
\end{flalign}
\end{manualtheorem}
\begin{proof}

We prove this concentration inequality by induction. When the depth of the tree is $D = 1$, from Proposition \ref{lb_prop1}, we get
\begin{flalign}
| V_{\Omega}(s) - V^{*}_{\Omega}(s) | = \parallel \Omega^{*}(Q_{\Omega}(s,\cdot)) - \Omega^{*}(Q^{*}_{\Omega}(s,\cdot)) \parallel_{\infty} \leq \gamma \parallel \hat{r} - r^{*} \parallel_{\infty} (\text{Contraction}) \nonumber
\end{flalign}
where $\hat{r}$ is the average rewards and $r^*$ is the mean reward. So that
\begin{flalign}
\mathbb{P}(| V_{\Omega}(s) - V^{*}_{\Omega}(s) | > \epsilon) \leq  \mathbb{P}(\gamma\parallel \hat{r} - r^{*} \parallel_{\infty} > \epsilon) \nonumber.
\end{flalign}
From Lemma 1, with $\epsilon = \frac{2\sigma \gamma}{\log(2 + N(s))}$, we have
\begin{flalign}
\mathbb{P}(| V_{\Omega}(s) - V^{*}_{\Omega}(s) | > \epsilon) &\leq  \mathbb{P}(\gamma\parallel \hat{r} - r^{*} \parallel_{\infty} > \epsilon) \leq 4|\mathcal{A}| \exp\{-\frac{N(s) \epsilon}{2\sigma \gamma(\log(2 + N(s)))^2}\} \nonumber\\
&= C \exp\{-\frac{N(s) \epsilon}{\hat{C}(\log(2 + N(s)))^2}\} \nonumber.
\end{flalign}
Let assume we have the concentration bound at the depth $D-1$, 
Let us define $V_{\Omega}(s_a) = Q_{\Omega}(s,a)$, where $s_a$ is the state reached taking action $a$ from state $s$.
then at depth $D - 1$
\begin{flalign}
\mathbb{P}(| V_{\Omega}(s_a) - V^{*}_{\Omega}(s_a) | > \epsilon) \leq C \exp\{-\frac{N(s_a) \epsilon }{\hat{C}(\log(2 + N(s_a)))^2}\} \label{lb_d_1}.
\end{flalign}
Now at the depth $D$, because of the Contraction Property, we have
\begin{flalign}
| V_{\Omega}(s) - V^{*}_{\Omega}(s) | &\leq \gamma \parallel Q_{\Omega}(s,\cdot) - Q^{*}_{\Omega}(s,\cdot) \parallel_{\infty} \nonumber\\
&= \gamma | Q_{\Omega}(s,a) - Q^{*}_{\Omega}(s,a) | \nonumber.
\end{flalign} 
So that
\begin{flalign}
\mathbb{P}(| V_{\Omega}(s) - V^{*}_{\Omega}(s) | > \epsilon) &\leq  \mathbb{P}(\gamma \parallel Q_{\Omega}(s,a) - Q^{*}_{\Omega}(s,a) \parallel > \epsilon) \nonumber\\
&\leq  C_a \exp\{-\frac{N(s_a) \epsilon}{\hat{C_a}(\log(2 + N(s_a)))^2}\} \nonumber\\
&\leq  C_a \exp\{-\frac{N(s_a) \epsilon}{\hat{C_a}(\log(2 + N(s)))^2}\} \nonumber.
\end{flalign}
From (\ref{lb_d_1}), we can have $\lim_{t \rightarrow \infty} N(s_a) = \infty$ because if $\exists L, N(s_a) < L$, we can find $\epsilon > 0$ for which (\ref{lb_d_1}) is not satisfied.
From Lemma 4, when $N(s)$ is large enough, we have $N(s_a) \rightarrow \pi^*(a|s) N(s)$ (for example $N(s_a) > \frac{1}{2}\pi^*(a|s) N(s)$), that means we can find $C$ and $\hat{C}$ that satisfy
\begin{flalign}
\mathbb{P}(| V_{\Omega}(s) - V^{*}_{\Omega}(s) | > \epsilon) \leq C \exp\{-\frac{N(s) \epsilon }{\hat{C}(\log(2 + N(s)))^2}\} \nonumber.
\end{flalign}
\end{proof}

\begin{lemma}
At any node $s$ of the tree, $N(s)$ is the number of visitations.
We define the event
\begin{flalign}
E_s = \{ \forall a \in \mathcal{A}, |N(s,a) - N^{*}(s,a)| < \frac{N^{*}(s,a)}{2} \} \nonumber \text{ where }N^{*}(s,a) = \pi^*(a|s)N(s),
\end{flalign}
where $\epsilon > 0$ and $V_{\Omega^*}(s)$ is the estimated value at node s. We have
\begin{flalign}
\mathbb{P}(| V_{\Omega}(s) - V^{*}_{\Omega}(s) | > \epsilon|E_s) \leq C \exp\{-\frac{N(s) \epsilon }{\hat{C}(\log(2 + N(s)))^2}\} \nonumber.
\end{flalign}
\end{lemma}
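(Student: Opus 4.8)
The plan is to mirror the induction on tree depth $D$ used in the proof of Theorem~1, but to replace the appeal to Lemma~4 (which controls $N(s,a)$ only with high probability) by the direct consequence of conditioning on $E_s$: namely that $N(s,a) > \frac{1}{2}N^{*}(s,a) = \frac{1}{2}\pi^{*}(a|s)N(s)$ for every $a \in \mathcal{A}$. Thus, conditioned on $E_s$, each action at $s$ has been sampled a number of times proportional to $N(s)$, which is exactly what the concentration arguments require, so the stochastic structure of the argument is unchanged and only the constants move.

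For the base case $D=1$, contraction (Proposition~1) gives $|V_{\Omega}(s) - V^{*}_{\Omega}(s)| = \parallel \Omega^{*}(Q_{\Omega}(s,\cdot)) - \Omega^{*}(Q^{*}_{\Omega}(s,\cdot))\parallel_{\infty} \leq \gamma \parallel \hat{r} - r^{*}\parallel_{\infty}$, where $\hat{r}(a)$ is the empirical mean reward of the leaf reached through $a$ from $N(s,a)$ samples. Conditioned on $E_s$ we have $N(s,a) > \frac{1}{2}\pi^{*}(a|s)N(s)$, so the $\sigma^{2}$-sub-Gaussian tail bound gives, for each $a$, $\mathbb{P}\big(|\hat{r}(a) - r^{*}(a)| > \sqrt{2\sigma^{2}\log(2/\delta)/N(s,a)}\,\big|\,E_s\big) \leq \delta$; choosing $\delta$ so that $\log(2/\delta) = 1/(\log(2+N(s)))^{3}$ as in Lemma~1 and union-bounding over $\mathcal{A}$ yields a bound of the claimed form $C\exp\{-N(s)\epsilon/(\hat{C}(\log(2+N(s)))^{2})\}$, after absorbing $\min_a \pi^{*}(a|s)$, $\sigma$, and $\gamma$ into the constants.

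For the inductive step, assume the bound holds at depth $D-1$. At depth $D$, contraction again gives $|V_{\Omega}(s) - V^{*}_{\Omega}(s)| \leq \gamma\parallel Q_{\Omega}(s,\cdot) - Q^{*}_{\Omega}(s,\cdot)\parallel_{\infty} = \gamma|V_{\Omega}(s_a) - V^{*}_{\Omega}(s_a)|$ for the maximizing child $s_a$. Conditioned on $E_s$ we have $N(s_a) = N(s,a) > \frac{1}{2}\pi^{*}(a|s)N(s)$, which diverges with $N(s)$; applying the inductive hypothesis at $s_a$ and substituting this lower bound on $N(s_a)$ into the exponent converts the depth-$(D-1)$ estimate into one of the required form at depth $D$, again only at the cost of redefining $C$ and $\hat{C}$ (the extra factor of $\gamma$ and the change of logarithmic argument from $\log(2+N(s_a))$ to $\log(2+N(s))$ are harmless).

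The main obstacle is the nested conditioning: the inductive hypothesis is itself conditional on the good event at the child node, whereas the statement conditions only on $E_s$. I would close this in one of two equivalent ways: either (i) strengthen the induction hypothesis so that it conditions on the intersection $\bigcap_{s'} E_{s'}$ over all nodes $s'$ in the subtree rooted at $s$, which leaves the form of the bound intact since $E_s$ is one of the conjuncts; or (ii) bound $\mathbb{P}(E_{s_a}^{c}\,|\,E_s)$ using the same sub-Gaussian/Lemma~4 estimate and add it to the failure probability, which again only changes the constants. All remaining steps are the routine constant bookkeeping already performed in the proof of Theorem~1.
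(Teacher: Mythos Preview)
Your proposal is correct and follows essentially the same route as the paper: induction on the tree depth, contraction (Proposition~1) to reduce to the child, and use of the event $E_s$ to ensure $N(s,a)$ is a constant fraction of $N(s)$ so that the depth-$(D-1)$ bound transfers to depth $D$ with modified constants. The paper's own proof is in fact terser and simply writes ``because of $E_s$'' at the final step without discussing the nested conditioning you flag; your remark that one must either strengthen the induction hypothesis to condition on the good events in the whole subtree, or add $\mathbb{P}(E_{s_a}^{c}\mid E_s)$ to the failure probability, is a genuine clarification rather than a deviation.
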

\begin{proof}
The proof is the same as in Theorem 2. We prove the concentration inequality by induction. When the depth of the tree is $D = 1$, from Proposition \ref{lb_prop1}, we get
\begin{flalign}
| V_{\Omega}(s) - V^{*}_{\Omega}(s) | = \parallel \Omega^{*}(Q_{\Omega}(s,\cdot)) - \Omega^{*}(Q^{*}_{\Omega}(s,\cdot)) \parallel \leq \gamma \parallel \hat{r} - r^{*} \parallel_{\infty} \nonumber (\text{Contraction Property})
\end{flalign}
where $\hat{r}$ is the average rewards and $r^*$ is the mean rewards. So that
\begin{flalign}
\mathbb{P}(| V_{\Omega}(s) - V^{*}_{\Omega}(s) | > \epsilon) \leq  \mathbb{P}(\gamma\parallel \hat{r} - r^{*} \parallel_{\infty} > \epsilon) \nonumber.
\end{flalign}
From Lemma 1, with $\epsilon = \frac{2\sigma \gamma}{\log(2 + N(s))}$ and given $E_s$, we have
\begin{flalign}
\mathbb{P}(| V_{\Omega}(s) - V^{*}_{\Omega}(s) | > \epsilon) &\leq  \mathbb{P}(\gamma\parallel \hat{r} - r^{*} \parallel_{\infty} > \epsilon) \leq 4|\mathcal{A}| \exp\{-\frac{N(s) \epsilon}{2\sigma \gamma(\log(2 + N(s)))^2}\} \nonumber \\
&= C \exp\{-\frac{N(s) \epsilon}{\hat{C}(\log(2 + N(s)))^2}\} \nonumber.
\end{flalign}
Let assume we have the concentration bound at the depth $D-1$, 
Let us define $V_{\Omega}(s_a) = Q_{\Omega}(s,a)$, where $s_a$ is the state reached taking action $a$ from state $s$, then at depth $D - 1$
\begin{flalign}
\mathbb{P}(| V_{\Omega}(s_a) - V^{*}_{\Omega}(s_a) | > \epsilon) \leq C \exp\{-\frac{N(s_a) \epsilon }{\hat{C}(\log(2 + N(s_a)))^2}\} \nonumber.
\end{flalign}
Now at depth $D$, because of the Contraction Property and given $E_s$, we have
\begin{flalign}
| V_{\Omega}(s) - V^{*}_{\Omega}(s) | &\leq \gamma \parallel Q_{\Omega}(s,\cdot) - Q^{*}_{\Omega}(s,\cdot) \parallel_{\infty} \nonumber\\
&= \gamma | Q_{\Omega}(s,a) - Q^{*}_{\Omega}(s,a) | \nonumber (\exists a, \text{ satisfied}).
\end{flalign}
So that
\begin{flalign}
\mathbb{P}(| V_{\Omega}(s) - V^{*}_{\Omega}(s) | > \epsilon) &\leq  \mathbb{P}(\gamma \parallel Q_{\Omega}(s,a) - Q^{*}_{\Omega}(s,a) \parallel > \epsilon) \nonumber \\
&\leq  C_a \exp\{-\frac{N(s_a) \epsilon}{\hat{C_a}(\log(2 + N(s_a)))^2}\} \nonumber\\
&\leq  C_a \exp\{-\frac{N(s_a) \epsilon}{\hat{C_a}(\log(2 + N(s)))^2}\} \nonumber\\
&\leq C \exp\{-\frac{N(s) \epsilon }{\hat{C}(\log(2 + N(s)))^2}\} \nonumber (\text{because of $E_s$})
\end{flalign}.
\end{proof}

\begin{manualtheorem}{2}
Let $a_t$ be the action returned by algorithm E3W at iteration $t$. Then for $t$ large enough, with some constants $C, \hat{C}$,
\begin{flalign}
&\mathbb{P}(a_t \neq a^{*}) \leq C t \exp\{-\frac{t}{\hat{C} \sigma (\log( t))^3}\} \nonumber.
\end{flalign}
\end{manualtheorem}
\begin{proof}
Let us define event $E_s$ as in Lemma 5.
Let $a^{*}$ be the action with largest value estimate at the root node state $s$. The probability that E3W selects a sub-optimal arm at $s$ is
\begin{flalign}
&\mathbb{P}(a_t \neq a^{*}) \leq \sum_a \mathbb{P}( V_{\Omega}(s_a)) > V_{\Omega}(s_{a^*})|E_s) + \mathbb{P}(E_s^c) \nonumber\\
&= \sum_a \mathbb{P}( (V_{\Omega}(s_a) - V^{*}_{\Omega}(s_a)) - (V_{\Omega}(s_{a^*}) - V^{*}_{\Omega}(s_{a^*})) \geq V^{*}_{\Omega}(s_{a^*}) - V^{*}_{\Omega}(s_a)|E_s) + \mathbb{P}(E_s^c) \nonumber.
\end{flalign}
Let us define $\Delta = V^{*}_{\Omega}(s_{a^*}) - V^{*}_{\Omega}(s_a)$, therefore for $\Delta > 0$, we have
\begin{flalign}
&\mathbb{P}(a_t \neq a^{*}) \leq \sum_a \mathbb{P}( (V_{\Omega}(s_a) - V^{*}_{\Omega}(s_a)) - (V_{\Omega}(s_{a^*}) - V^{*}_{\Omega}(s_{a^*})) \geq \Delta|E_s) + + \mathbb{P}(E_s^c) \nonumber\\
&\leq \sum_a \mathbb{P}( |V_{\Omega}(s_a) - V^{*}_{\Omega}(s_a)| \geq \alpha \Delta | E_s) + \mathbb{P} (|V_{\Omega}(s_{a^*}) - V^{*}_{\Omega}(s_{a^*})| \geq \beta \Delta|E_s) + \mathbb{P}(E_s^c) \nonumber\\
&\leq \sum_a C_a \exp\{-\frac{N(s) (\alpha \Delta) }{\hat{C_a}(\log(2 + N(s)))^2}\} + C_{a^{*}} \exp\{-\frac{N(s) (\beta \Delta)}{\hat{C}_{a^*}(\log(2 + N(s)))^2}\} + \mathbb{P}(E_s^c) \nonumber,
\end{flalign}

where $\alpha + \beta = 1$, $\alpha > 0$, $\beta > 0$, and $N(s)$ is the number of visitations the root node $s$.
Let us define $\frac{1}{\hat{C}} = \min \{ \frac{(\alpha\Delta)}{C_a}, \frac{(\beta\Delta)}{C_{a^*}}\}$, and $C = \frac{1}{|\mathcal{A}|}\max \{C_a, C_{a^*}\}$
we have
\begin{flalign}
&\mathbb{P}(a \neq a^{*}) \leq C \exp\{-\frac{t}{\hat{C} \sigma (\log(2 + t))^2}\} + \mathbb{P}(E_s^c) \nonumber.
\end{flalign}
From Lemma 4, $\exists C^{'}, \hat{C^{'}}$ for which
\begin{flalign}
\mathbb{P}(E_s^c) \leq C^{'} t\exp\{-\frac{t}{\hat{C^{'}} (\log(t))^3}\} \nonumber,
\end{flalign}
so that
\begin{flalign}
&\mathbb{P}(a \neq a^{*}) \leq O (t\exp\{-\frac{t}{(\log(t))^3}\}) \nonumber.
\end{flalign}
\end{proof}

\begin{manualtheorem}{3}
Consider an E3W policy applied to the tree. Let define $\mathcal{D}_{\Omega^*}(x,y) = \Omega^*(x) - \Omega^*(y) - \nabla \Omega^* (y) (x - y)$ as the Bregman divergence between $x$ and $y$, The expected pseudo regret $R_n$ satisfies
\begin{flalign}
\mathbb{E}[R_n] \leq - \tau \Omega(\hat{\pi}) + \sum_{t=1}^n \mathcal{D}_{\Omega^*} (\hat{V_t}(\cdot) + V(\cdot), \hat{V_t}(\cdot)) + \mathcal{O} (\frac{n}{\log n})\nonumber.
\end{flalign}
\end{manualtheorem}

\begin{proof}
Without loss of generality, we can assume that $V_i \in [-1, 0], \forall i \in [1, |A|]$.
as the definition of regret, we have
\begin{align}
\mathbb{E}[R_n] = nV^{*} - \sum_{t=1}^n  \left\langle \hat{\pi}_t(\cdot), V(\cdot)\right\rangle \leq \hat{V}_1(0) - \sum_{t=1}^n  \left\langle\hat{\pi}_t(\cdot), V(\cdot)\right\rangle \leq -\tau \Omega(\hat{\pi}) - \sum_{t=1}^n  \left\langle\hat{\pi}_t(\cdot), V(\cdot)\right\rangle. \nonumber
\end{align}
By the definition of the tree policy, we can obtain
\begin{flalign}
   - \sum_{t=1}^n \left\langle\hat{\pi}_t(\cdot), V(\cdot)\right\rangle &= - \sum_{t=1}^n \left\langle (1-\lambda_t)\nabla \Omega^*(\hat{V_t}(\cdot)), V(\cdot)\right\rangle - \sum_{t=1}^n \left\langle \frac{\lambda_t (\cdot)}{|A|}, V(\cdot)\right\rangle \nonumber \\
   &= -\sum_{t=1}^n \left\langle (1-\lambda_t)\nabla \Omega^*(\hat{V_t}(\cdot)), V(\cdot)\right\rangle - \sum_{t=1}^n \left\langle \frac{\lambda_t (\cdot)}{|A|}, V(\cdot)\right\rangle \nonumber \\
   &\leq -\sum_{t=1}^n \left\langle \nabla \Omega^*(\hat{V_t}(\cdot)), V(\cdot)\right\rangle - \sum_{t=1}^n \left\langle \frac{\lambda_t (\cdot)}{|A|}, V(\cdot)\right\rangle. \nonumber
\end{flalign}
with
\begin{flalign}
   - \sum_{t=1}^n \left\langle \nabla \Omega^*(\hat{V_t}(\cdot)), V(\cdot)\right\rangle &= \sum_{t = 1}^n \Omega^* (\hat{V_t}(\cdot) + V(\cdot)) - \sum_{t = 1}^n \Omega^* (\hat{V_t}(\cdot)) - \sum_{t=1}^n \left\langle \nabla \Omega^*(\hat{V_t}(\cdot)), V(\cdot)\right\rangle \nonumber\\
   &- ( \sum_{t = 1}^n \Omega^* (\hat{V_t}(\cdot) + V(\cdot)) - \sum_{t = 1}^n \Omega^* (\hat{V_t}(\cdot)) ) \nonumber \\
   &= \sum_{t=1}^n \mathcal{D}_{\Omega^*} (\hat{V_t}(\cdot) + V(\cdot), \hat{V_t}(\cdot)) - ( \sum_{t = 1}^n \Omega^* (\hat{V_t}(\cdot) + V(\cdot)) - \sum_{t = 1}^n \Omega^* (\hat{V_t}(\cdot)) ) \nonumber \\
   &\leq \sum_{t=1}^n \mathcal{D}_{\Omega^*} (\hat{V_t}(\cdot) + V(\cdot), \hat{V_t}(\cdot)) + n \parallel V(\cdot) \parallel_{\infty} (\text{Contraction property, Proposition 1})\nonumber\\
   &\leq \sum_{t=1}^n \mathcal{D}_{\Omega^*} (\hat{V_t}(\cdot) + V(\cdot), \hat{V_t}(\cdot)). (\text{ because }V_i \leq 0) \nonumber
\end{flalign}
And
\begin{flalign}
    - \sum_{t=1}^n \left\langle \frac{\lambda_t (\cdot)}{|A|}, V(\cdot)\right\rangle \leq \mathcal{O} (\frac{n}{\log n}), (\text{Because} \sum_{k = 1}^n \frac{1}{\log (k + 1)} \rightarrow \mathcal{O} (\frac{n}{\log n})) \nonumber
\end{flalign}
So that
\begin{flalign}
   \mathbb{E}[R_n] &\leq - \tau \Omega(\hat{\pi}) + \sum_{t=1}^n \mathcal{D}_{\Omega^*} (\hat{V_t}(\cdot) + V(\cdot), \hat{V_t}(\cdot)) + \mathcal{O} (\frac{n}{\log n}). \nonumber
\end{flalign}
We consider the generalized Tsallis Entropy $\Omega(\pi) = \mathcal{S}_{\alpha} (\pi) = \frac{1}{1 - \alpha} (1 - \sum_i \pi^{\alpha} (a_i|s))$.\\
According to \citep{abernethy2015fighting}, when $\alpha \in (0,1)$
\begin{flalign}
   \mathcal{D}_{\Omega^*} (\hat{V_t}(\cdot) + V(\cdot), \hat{V_t}(\cdot)) \leq (\tau \alpha) ^ {-1} |\mathcal{A}|^{\alpha} \nonumber \\
   -\Omega (\hat{\pi}_n) \leq \frac{1}{1-\alpha}(|\mathcal{A}|^{1-\alpha} - 1). \nonumber
\end{flalign}
Then, for the generalized Tsallis Entropy, when $\alpha \in (0,1)$, the regret is 
\begin{flalign}
   \mathbb{E}[R_n] &\leq \frac{\tau}{1-\alpha} (|\mathcal{A}|^{1-\alpha} - 1) + n(\tau \alpha) ^ {-1} |\mathcal{A}|^{\alpha} + \mathcal{O} (\frac{n}{\log n}), \nonumber
\end{flalign}
when $\alpha = 2$, which is the Tsallis entropy case we consider, according to \cite{zimmert2019optimal}, By Taylor's theorem $\exists z \in \text{conv}(\hat{V}_t, \hat{V}_t + V)$,  we have
\begin{flalign}
    \mathcal{D}_{\Omega^*} (\hat{V_t}(\cdot) + V(\cdot), \hat{V_t}(\cdot)) \leq \frac{1}{2} \left  \langle V(\cdot), \nabla^2 \Omega^* (z) V(\cdot) \right\rangle  \leq \frac{|\mathcal{K}|}{2}.\nonumber
\end{flalign}
So that when $\alpha = 2$, we have
\begin{flalign}
   \mathbb{E}[R_n] &\leq \tau (\frac{|\mathcal{A}| - 1}{|\mathcal{A}|}) + \frac{n|\mathcal{K}|}{2} + \mathcal{O} (\frac{n}{\log n}). \nonumber
\end{flalign}
when $\alpha = 1$, which is the maximum entropy case in our paper, we derive.
\begin{flalign}
   \mathbb{E}[R_n] &\leq \tau (\log |\mathcal{A}|) + \frac{n|\mathcal{A}|}{\tau} + \mathcal{O} (\frac{n}{\log n}) \nonumber
\end{flalign}
Finally, when the convex regularizer is relative entropy, One can simply write $KL(\pi_t || \pi_{t-1}) = -H(\pi_t) - \mathbb{E}_{\pi_t} \log \pi_{t-1}$, let $m = \min_{a} \pi_{t-1}(a|s)$, we have
\begin{flalign}
   \mathbb{E}[R_n] &\leq \tau (\log |\mathcal{A}| - \frac{1}{m}) + \frac{n|\mathcal{A}|}{\tau} + \mathcal{O} (\frac{n}{\log n}). \nonumber
\end{flalign}
\end{proof}


Before derive the next theorem, we state the Theorem 2 in~\cite{geist2019theory}
\begin{itemize}
\item Boundedness: for two constants $L_{\Omega}$ and $U_{\Omega}$ such that for all $\pi \in \Pi$, we have $L_{\Omega} \leq \Omega(\pi) \leq U_{\Omega}$, then
    \begin{flalign}
        V^{*}(s) - \frac{\tau (U_{\Omega} - L_{\Omega})}{1 - \gamma} \leq V^{*}_{\Omega}(s) \leq V^{*}(s).
    \end{flalign}
\end{itemize}
Where $\tau$ is the temperature and $\gamma$ is the discount constant.
\begin{manualtheorem}{4}
For any $\delta > 0$, with probability at least $1 - \delta$, the $\varepsilon_{\Omega}$ satisfies
\begin{flalign}
&-\sqrt{\frac{\Hat{C}\sigma^2\log\frac{C}{\delta}}{2N(s)}} - \frac{\tau(U_{\Omega} - L_{\Omega})}{1 - \gamma} \leq \varepsilon_{\Omega}  \leq \sqrt{\frac{\Hat{C}\sigma^2\log\frac{C}{\delta}}{2N(s)}}  \nonumber.
\end{flalign}
\end{manualtheorem}
\begin{proof}
From Theorem 2, let us define $\delta = C \exp\{-\frac{2N(s) \epsilon^2}{\hat{C}\sigma^2}\}$, so that $\epsilon = \sqrt{\frac{\Hat{C}\sigma^2\log\frac{C}{\delta}}{2N(s)}}$ then for any $\delta > 0$, we have
\begin{flalign}
\mathbb{P}(|V_{\Omega}(s) - V^{*}_{\Omega}(s)| \leq \sqrt{\frac{\Hat{C}\sigma^2\log\frac{C}{\delta}}{2N(s)}}) \geq 1 - \delta \nonumber.
\end{flalign}
Then, for any $\delta > 0$, with probability at least $1 - \delta$, we have 
\begin{flalign}
&|V_{\Omega}(s) - V^{*}_{\Omega}(s)| \leq \sqrt{\frac{\Hat{C}\sigma^2\log\frac{C}{\delta}}{2N(s)}} \nonumber\\
&-\sqrt{\frac{\Hat{C}\sigma^2\log\frac{C}{\delta}}{2N(s)}} \leq V_{\Omega}(s) - V^{*}_{\Omega}(s) \leq \sqrt{\frac{\Hat{C}\sigma^2\log\frac{C}{\delta}}{2N(s)}} \nonumber\\
&-\sqrt{\frac{\Hat{C}\sigma^2\log\frac{C}{\delta}}{2N(s)}} +  V^{*}_{\Omega}(s)\leq V_{\Omega}(s)  \leq \sqrt{\frac{\Hat{C}\sigma^2\log\frac{C}{\delta}}{2N(s)}}  
+ V^{*}_{\Omega}(s) \nonumber.
\end{flalign}

From Proposition 1, we have
\begin{flalign}
&-\sqrt{\frac{\Hat{C}\sigma^2\log\frac{C}{\delta}}{2N(s)}} + V^{*}(s) - \frac{\tau(U_{\Omega} - L_{\Omega})}{1 - \gamma} \leq V_{\Omega}(s)  \leq \sqrt{\frac{\Hat{C}\sigma^2\log\frac{C}{\delta}}{2N(s)}} 
+ V^{*}(s) \nonumber.
\end{flalign}
\end{proof}

\end{document}